\newcommand{\la}{\langle}
\newcommand{\ra}{\rangle}
\newcommand{\qvalue}{Q}
\newcommand{\vvalue}{V}
\newcommand{\reward}{r}
\newcommand{\valueite}{\text{EVI}}
\def \algname {\text{UCLK}}
\newcommand{\state}{x}
\newcommand{\alglinelabel}{%
  \addtocounter{ALC@line}{-1}
  \refstepcounter{ALC@line}
  \label
}
\newcommand*{\rom}[1]{\expandafter\@slowromancap\romannumeral #1@}
\title{\huge Provably Efficient Reinforcement Learning for Discounted MDPs with Feature Mapping}
\author
{
	Dongruo Zhou\thanks{Department of Computer Science, University of California, Los Angeles, CA 90095, USA; e-mail: {\tt drzhou@cs.ucla.edu}} 
	~~~and~~~
	Jiafan He\thanks{Department of Computer Science, University of California, Los Angeles, CA 90095, USA; e-mail: {\tt jiafanhe19@ucla.edu}} 
	~~~and~~~
	Quanquan Gu\thanks{Department of Computer Science, University of California, Los Angeles, CA 90095, USA; e-mail: {\tt qgu@cs.ucla.edu}}
}
\begin{document}
\date{}
\maketitle

\begin{abstract}%
 Modern tasks in reinforcement learning have large state and action spaces. To deal with them efficiently, one often uses predefined feature mapping to represent states and actions in a low-dimensional space. In this paper, we study reinforcement learning for discounted Markov Decision Processes (MDPs), where the transition kernel can be parameterized as a linear function of certain feature mapping. 
 We propose a novel algorithm that makes use of the feature mapping and obtains a $\tilde O(d\sqrt{T}/(1-\gamma)^2)$ regret, where $d$ is the dimension of the feature space, $T$ is the time horizon and $\gamma$ is the discount factor of the MDP. To the best of our knowledge, this is the first polynomial regret bound without accessing the generative model or making strong assumptions such as ergodicity of the MDP. By constructing a special class of MDPs, we also show that for any algorithms, the regret is lower bounded by  $\Omega(d\sqrt{T}/(1-\gamma)^{1.5})$. Our upper and lower bound results together suggest that the proposed reinforcement learning algorithm is near-optimal up to a $(1-\gamma)^{-0.5}$ factor.
\end{abstract}

\section{Introduction}
Designing efficient algorithms that learn and plan in sequential decision-making tasks with large state and action spaces has become the central goal of modern reinforcement learning (RL) in recent years. Due to numerous possible states and actions, traditional tabular reinforcement learning methods \citep{watkins1989learning, jaksch2010near, azar2017minimax} which directly access each state-action pair are computationally intractable. A common method to design reinforcement learning algorithms for large-scale state and action spaces is to make use of feature mappings such as linear functions or neural networks to map states and actions to a low-dimensional space and solve the decision-making problem in the feature space. Despite the empirical success of feature mapping based reinforcement learning methods  \citep{singh1995reinforcement,bertsekas2018feature}, the theoretical understanding and the fundamental limits of these methods remain largely understudied. 

In this paper, we aim to develop provable reinforcement learning algorithms with feature mapping for discounted Markov Decision Processes (MDPs). Discounted MDP is one of the most widely used models to formulate the modern reinforcement learning tasks such as Atari games \citep{mnih2015human} and deep recommendation system \citep{zheng2018drn}. With feature mapping, a series of recent work \citep{yang2019sample,lattimore2019learning, bhandari2018finite,zou2019finite} have proposed provably efficient algorithms along with theoretical guarantees. However, these existing results either rely on a special oracle called \emph{generative model} \citep{kakade2003sample} that allows an algorithm to query any possible state-action pairs and return both the reward and the next state \citep{yang2019sample, lattimore2019learning}, or needs strong assumptions such as uniform ergodicity \citep{bhandari2018finite, zou2019finite} on the underlying MDP. 
A natural question arises:

\begin{center}
   \emph{Can we design provably efficient RL algorithms with feature mapping for discounted MDPs under mild assumptions?} 
\end{center}

We answer this question affirmatively. To be more specific, we consider a special class of discounted MDPs called \emph{linear kernel MDP}, where the transition probability kernel can be represented as a linear function of a predefined $d$-dimensional feature mapping. A similar model has been studied in earlier work \citet{jia2020model, ayoub2020model} for finite horizon episodic MDPs, where the authors call it \emph{linear mixture model}. Linear kernel MDP is a rich MDP class, which covers many classes of MDPs proposed in previous work \citep{yang2019reinforcement, modi2019sample} as special cases. 
We propose a novel provably efficient algorithm namely Upper-Confidence Linear Kernel reinforcement learning ($\algname$) to solve this MDP. We prove both upper and lower regret bounds and show that our algorithm is near-optimal under the linear kernel MDP setting. 

Our contributions are summarized as follows.
\begin{itemize}
    \item We propose a novel algorithm $\algname$ to learn the optimal value function with the help of predefined feature mapping. We show that the regret (See Definition~\ref{def:regret}) for $\algname$ to learn the optimal value function is $\tilde O(d\sqrt{T}/(1-\gamma)^2)$. It is worth noting that the regret is independent of the cardinality of the state and action spaces, which suggests that $\algname$ is efficient for large-scale RL problems. To the best of our knowledge, this is the first feature-based reinforcement learning algorithm that attains a polynomial regret bound for discounted MDPs without accessing the generative model or making strong assumptions on MDPs such as ergodicity\footnote{Without a generative model (simulator) or further assumptions on MDP, some states may never be visited starting from certain initial states, which makes it impossible to find a near-optimal policy on them. Therefore, it is not meaningful to consider the sample complexity of UCLK to find an $\epsilon$-optimal policy.}.
    \item We also show that for any reinforcement learning algorithms, the regret to learn the optimal value function in linear kernel MDP is at least $\Omega(d\sqrt{T}/(1-\gamma)^{1.5})$. This lower bound result suggests that $\algname$ is optimal concerning feature mapping dimension $d$ and time horizon $T$, and it is near-optimal concerning the discount factor up to $(1-\gamma)^{-0.5}$. Our proof is based on a specially constructed linear kernel MDP, which could be of independent interest.
\end{itemize}

After we posted the first version of this paper online, we were informed that 
the linear kernel MDP setting is the same as the so-called \emph{parameterized transition mode} or \emph{linear mixture model} in earlier work \citep{jia2020model, ayoub2020model}.

The remainder of this paper is organized as follows. In Section \ref{section 2}, we review the related work in the literature. We introduce preliminaries in in Section \ref{section 3}, and our algorithm in Section \ref{section 4}. In Section~\ref{section 5}, we present our main theoretical results including both upper and lower regret bounds, followed by a proof sketch of the main theory in Section~\ref{sec:sketch}. Finally, we conclude this paper in Section~\ref{section 8}. The detailed proofs are deferred to the supplementary material.

\noindent\textbf{Notation} 
We use lower case letters to denote scalars, and use lower and upper case bold face letters to denote vectors and matrices respectively. Let $\ind(\cdot)$ denote the indicator function. For a vector $\xb\in \RR^d$ and matrix $\bSigma\in \RR^{d\times d}$, we denote by $\|\xb\|_2$ the Euclidean norm and denote by $\|\xb\|_{\bSigma}=\sqrt{\xb^\top\bSigma\xb}$. For two sequences $\{a_n\}$ and $\{b_n\}$, we write $a_n=O(b_n)$ if there exists an absolute constant $C$ such that $a_n\leq Cb_n$, and we write $a_n=\Omega(b_n)$ if there exists an absolute constant $C$ such that $a_n\geq Cb_n$. We use $\tilde O(\cdot)$ to further hide the logarithmic factors. 

\section{Related Work}\label{section 2}

\noindent \textbf{Finite-horizon MDPs with feature mappings.} There is a series of work focusing on solving finite-horizon MDP using RL with function approximation \citep{jin2019provably, yang2019reinforcement, wang2019optimism, modi2019sample, jiang2017contextual, zanette2020learning, du2019good}. For instance, \citet{jin2019provably} assumed the underlying transition kernel and reward function are linear functions of a $d$-dimensional feature mapping and proposed an RL algorithm with $\tilde O(\sqrt{d^3H^3T})$ regret, where $H$ is the length of an episode. \citet{yang2019reinforcement} assumed the probability transition kernel is bilinear in two feature mappings in dimension $d$ and $d'$, and proposed an algorithm with $\tilde O(dH^2\sqrt{T})$ regret. \citet{wang2019optimism} assumed the Bellman backup of any value function is a generalized linear function of certain feature mapping and proposed an algorithm with a regret guarantee. \citet{modi2019sample} assumed the underlying MDP can be represented as a linear combination of several base models and proposed an RL algorithm to solve it with a provable guarantee. \citet{jiang2017contextual} assumed the underlying MDP is of low inherent Bellman error and proposed an algorithm with polynomial PAC bounds.  \citet{jia2020model} studied the linear mixture model and proposed a UCRL-VTR algorithm for finite-horizon MDPs which
achieves a $\tilde O(d\sqrt{H^3T})$ regret, where $H$ is the episode length. \citet{ayoub2020model} considered the same model but with general function approximation, and proved a regret bound depending on Eluder dimension \citep{russo2013eluder}. \citet{jia2020model, ayoub2020model} also proved a lower bound of regret by considering the hard tabular MDP firstly proposed in \citet{jaksch2010near}. \citet{zanette2020learning} studied a similar MDP as \citet{jin2019provably} and proposed an algorithm with tighter regret bound. \citet{du2019good} suggested that the sample complexity to learn the optimal policy can be exponential if the approximation error to the value function is moderate. More discussions and insights regarding these negative results can be found in \citet{van2019comments, lattimore2019learning}. 


\noindent \textbf{Discounted MDPs with a generative model.} For tabular discounted MDPs, many work focuses on RL with the help of a generative model (or called a simulator) \citep{kakade2003sample}. To learn the optimal value function, \citet{azar2013minimax} proposed Empirical QVI, which learns an $\epsilon$-suboptimal value function with $\tilde O(|\cS||\cA|/((1-\gamma)^3\epsilon^2))$ optimal sample complexity.
To learn the optimal policy, 
\citet{kearns1999finite} proposed Phased Q-Learning which learns an $\epsilon$-suboptimal policy with $\tilde O(|\cS||\cA|/((1-\gamma)^7\epsilon^2))$ sample complexity.
\citet{sidford2018variance} proposed
a Sublinear Randomized Value Iteration algorithm which achieves a 
$\tilde O(|\cS||\cA|/((1-\gamma)^4\epsilon^2))$ sample complexity. \citet{sidford2018near} further proposed Variance-Reduced QVI algorithm which achieves the optimal $\tilde O(|\cS||\cA|/((1-\gamma)^3\epsilon^2))$ sample complexity. For discounted MDPs with function approximation, \citet{yang2019sample} assumed the probability transition kernel can be parameterized by a $d$-dimensional feature mapping and proposed a Phased Parametric Q-Learning algorithm which learns an $\epsilon$-suboptimal policy with the optimal $\tilde O(d/((1-\gamma)^3\epsilon^2))$ sample complexity. \citet{lattimore2019learning} considered a similar setting to \citet{yang2019sample} and proposed a Phased Elimination algorithm with $\tilde O(d/((1-\gamma)^4\epsilon^2))$ sample complexity. 

\noindent \textbf{Discounted MDPs without a generative model.} Another line of work aims at learning the discounted MDP without accessing the generative model. \citet{szita2010model} proposed an MoRmax algorithm which achieves $\tilde O(|\cS||\cA|/((1-\gamma)^6\epsilon^2))$ sample complexity of exploration. \citet{lattimore2012pac} proposed UCRL algorithm which achieves $\tilde O(|\cS|^2|\cA|/((1-\gamma)^3\epsilon^2))$ sample complexity of exploration. \citet{strehl2006pac} proposed delay-Q-learning with  $\tilde O(|\cS||\cA|/((1-\gamma)^8\epsilon^4))$ sample complexity of exploration. \citet{dong2019q} proposed Infinite Q-learning with UCB which achieves $\tilde O(|\cS||\cA|/((1-\gamma)^7\epsilon^2))$ sample complexity of exploration. \citet{liu2020regret} proposed the regret definition for discounted MDPs and presented Double Q-Learning to achieve $\tilde O(\sqrt{|\cS||\cA|T}/{(1-\gamma)^{2.5}})$ regret. Our work falls into this category, and also uses regret to characterize the performance of RL.


\section{Preliminaires}\label{section 3}
We consider infinite-horizon discounted Markov Decision Processes (MDPs), which is denoted by a tuple $M(\cS, \cA, \gamma, \reward, \PP)$. Here $\cS$ is a countable state space (may be infinite), $\cA$ is the action space, $\gamma: 0 \leq \gamma <1$ is the discount factor, $\reward: \cS \times \cA \rightarrow [0,1]$ is the reward function. For simplicity, we assume the reward function $\reward$ is \emph{deterministic} and \emph{known}. $\PP(s'|s,a) $ is the transition probability function which denotes the probability for state $s$ to transfer to state $s'$ given action $a$. A (nonstationary) policy $\pi$ is a collection of policies $\pi_t$, where each $\pi_t: \{\cS \times \cA\}^{t-1}\times \cS\rightarrow \cA$ maps history $s_1, a_1, \dots, s_{t-1}, a_{t-1}, s_t$ to an action $a$. Let $\{s_t, a_t\}_{t=1}^\infty$ are states and actions deduced by $\PP$ and $\pi$. We denote the action-value function $\qvalue^\pi_t(s,a)$ and value function $\vvalue^\pi_t(s,a)$ as follows
\begin{align}
&\qvalue^\pi_t(s,a) = \EE\bigg[\sum_{i = 0}^\infty \gamma^{i}\reward(s_{t+i}, a_{t+i})\bigg|s_1,\dots, s_t = s, a_t = a\bigg],\notag\\
&\vvalue^\pi_t(s,a) = \EE\bigg[\sum_{i = 0}^\infty \gamma^{i}\reward(s_{t+i}, a_{t+i})\bigg|s_1,\dots, s_t = s\bigg].\notag
\end{align}
We define the optimal value function $V^*$ and the optimal action-value function $\qvalue^*$ as $V^*(s) = \sup_{\pi}\vvalue^{\pi_1}(s)$ and $\qvalue^*(s,a) = \sup_{\pi}\qvalue^{\pi_1}(s,a)$.
For simplicity, for any function $\vvalue: \cS \rightarrow \RR$, we denote $[\PP \vvalue](s,a)=\EE_{s' \sim \PP(\cdot|s,a)}\vvalue(s')$. Therefore we have the following Bellman equation, as well as the Bellman optimality equation:
\begin{align}
    &\qvalue^\pi_t(s_t,a_t) = \reward(s_t,a_t) +\gamma [\PP\vvalue^\pi_{t+1}](s_t,a_t),\ \qvalue^*(s_t,a_t) = \reward(s_t,a_t) + \gamma[\PP\vvalue^*](s_t,a_t).\notag
\end{align}

In this work, we consider a special class of MDPs called \emph{linear kernel MDPs}, where the transition probability function can be represented as a linear function of a given feature mapping $\bphi: \cS \times \cA \times \cS \rightarrow \RR^d$. It is worth noting that this is essentially the same MDP class as \emph{linear mixture model} considered in \citet{jia2020model, ayoub2020model}. Formally speaking, we have the following assumption for a linear kernel MDP.
\begin{definition}\label{assumption-linear}
$M(\cS, \cA, \gamma, \reward, \PP)$ is called a linear kernel MDP if there exist a \emph{known} feature mapping $\bphi(s'|s,a): \cS \times \cA \times \cS \rightarrow \RR^d$ and an \emph{unknown} vector $\btheta \in \RR^d$ with $\|\btheta\|_2 \leq \sqrt{d}$, such that 
\begin{itemize}[leftmargin = *]
    \item For any state-action-state triplet $(s,a,s') \in \cS \times \cA \times \cS$, we have $\PP(s'|s,a) = \la \bphi(s'|s,a), \btheta\ra$;
    \item For any bounded function $\vvalue: \cS \rightarrow [0,R]$ and any tuple $(s,a)\in \cS \times \cA$, we have $\|\bphi_{{\vvalue}}(s,a)\|_2 \leq \sqrt{d}R$, where $\bphi_{{\vvalue}}(s,a) = \sum_{s'}\bphi(s'|s,a)\vvalue(s') \in \RR^d$.\label{def:bphi}
\end{itemize}
We denote the linear kernel MDP by $M_{\btheta}$ for simplicity.
\end{definition}


As we will show in the following examples, linear kernel MDPs cover several MDPs studied in previous work as special cases. 
\begin{example}[Tabular MDPs]\label{exp: tabular}
For an MDP $M(\cS, \cA, \gamma, \reward, \PP)$ with $|\cS|, |\cA| \leq \infty$, the transition probability function can be parameterized by $|\cS|^2|\cA|$ \emph{unknown} parameters. 
The tabular MDP is a special case of linear kernel MDPs with the following feature mapping and parameter vector: $d = |\cS|^2|\cA|,\ \bphi(s'|s,a) = \eb_{(s,a,s')} \in \RR^d,\ \btheta = [\PP(s'|s,a)]\in\RR^d$,
where $\eb_{(s,a,s')}$ denotes the corresponding natural basis in the $d$-dimensional Euclidean space.
\end{example}

\begin{example}[Linear combination of base models \citep{modi2019sample}]\label{example:basemodel}
For an MDP $M(\cS, \cA, \gamma, \reward, \PP)$, suppose there exist $m$ base transition probability functions $\{p_i(s'|s,a)\}_{i=1}^{m}$, a feature mapping $\bpsi(s,a): \cS \times \cA \rightarrow \Delta^{d'}$ where $\Delta^{d'}$ is a $(d'-1)$-dimensional simplex, and an \emph{unknown} matrix $\Wb \in \RR^{m \times d'} \in [0,1]^{m\times d'}$ such that $\PP(s'|s,a) = \sum_{k=1}^{m} [\Wb\bpsi(s,a)]_k p_k(s'|s,a)$. Then it is a special case of linear kernel MDPs with feature mapping and parameter vector defined as follows: $d = md',\ \bphi(s'|s,a) = \text{vec}(\pb(s'|s,a)\bpsi(s,a)^\top)\in \RR^d,\ \btheta = \text{vec}(\Wb)\in \RR^d$, 
where $\text{vec}(\cdot)$ is the vectorization operator, and $\pb(s'|s,a) = [p_k(s'|s,a)] \in \RR^{m}$. 
\end{example}

\begin{example}[Feature embedding of a transition model \citep{yang2019reinforcement}] 
For an MDP $M(\cS, \cA, \gamma, \reward, \PP)$, suppose that there exist feature mappings $\bpsi_1(s,a): \cS \times \cA \rightarrow \RR^{d_1}$ satisfying $\|\bpsi_1(s,a)\|_2 \leq \sqrt{d_1}$, $\bpsi_2(s'): \cS \rightarrow \RR$ satisfying for any $\vvalue: \cS \rightarrow [0, R]$, $\|\sum_s \vvalue(s) \bpsi_2(s)\|_2 \leq R$ and an \emph{unknown} matrix $\Mb \in \RR^{d_1 \times d_2}$ satisfying $\|\Mb\|_F \leq \sqrt{d_1}$ such that $\PP(s'|s,a) = \bpsi_1(s,a)^\top \Mb\bpsi_2(s')$. Then it is a special case of linear kernel MDPs with the following feature mapping and parameter vector $d = d_1d_2,\ \bphi(s'|s,a) = \text{vec}\big(\bpsi_2(s')\bpsi_1(s,a)^\top\big) \in \RR^d,\ \btheta = \text{vec}(\Mb) \in \RR^d$.
\end{example}
\noindent\textbf{Comparison with linear MDPs.}  \citet{yang2019sample, jin2019provably} studied the so-called linear additive model or \emph{linear MDP}, which assumes the probability transition function can be represented as $\PP(\cdot|s,a) = \la \bpsi(s,a), \bmu(\cdot)\ra$, where $\bpsi(s,a)$ is a known feature mapping and $\mu(\cdot)$ is an unknown measure. It is worth noting that linear kernel MDPs studied in our paper and linear MDPs \citep{yang2019sample, jin2019provably} are two different classes of MDPs since they are based on different feature mappings, i.e., $\bphi(s'|s,a)$ versus $\bpsi(s,a)$. One cannot be covered by the other. For instance, some MDPs only fit linear MDPs such as $\PP(s'|s,a) = \sum_{i=1}^d \phi_i(s,a) \mu_i(s')$ satisfying $\phi_i(s,a)>0, \sum_{i=1}^d \phi_i(s,a) = 1$ and $\mu_i(s')$ is an unknown measure of $s'$. Some MDPs only fit linear kernel MDPs such as $\cS = \RR$, $\cA = \RR/\{0\}$, $\PP(s'|s,a) = \sum_{i=1}^d \theta_i p_i(s'|s,a)$, $p_i(s'|s,a) = \exp(-(s'-s-i)^2/(2a^2))/\sqrt{2\pi a^2}$. It is not a linear MDP because $p_i(s'|s,a)$ can not be decomposed as $\phi_i(s,a)\cdot\mu_i(s')$. In the rest of this paper, we assume the underlying linear kernel MDP is parameterized by $\btheta^*$ and denote it by $M_{\btheta^*}$.


In the online learning setting, the environment picks the starting state $s_1$ at the beginning. The goal is to design a nonstationary policy $\pi$ such that the expected discounted return at step $t$, $\vvalue^\pi_t(s_t)$, is close to the optimal expected return $\vvalue^*(s_t)$. We formalize this goal as minimizing the regret, which can be defined as follows, inspired by \citet{liu2020regret}. 
\begin{definition}\label{def:regret} 
For any policy $\pi$, we define its regret on MDP $M(\cS, \cA, \gamma, \reward, \PP)$ in the first $T$ rounds as the sum of the suboptimality $\Delta_t$ for $t = 1,\ldots, T$, i.e.,
\begin{align}
    \text{Regret}(\pi, M, T) = \sum_{t=1}^T \Delta_t,\ \text{where} \ \Delta_t = 
    \vvalue^*(s_t) - \vvalue^\pi_t(s_t),\notag
\end{align}
\end{definition}
Due to the optimality of the optimal value function $\vvalue^*$, we know that $\Delta_t \geq 0$ for any policy $\pi$. This fact suggests that $\text{Regret}(T)$ can be regarded as a cumulative error for $\pi$ to learn the optimal value function of MDP $M$. 

\noindent\textbf{Relation to sample complexity of exploration.}
A related quantity widely used for discounted MDPs is called the \emph{sample complexity of exploration} $N(\epsilon, \delta)$ \citep{szita2010model, lattimore2012pac, dong2019q}, which is defined as the number of rounds $t$ where $\Delta_t$ is greater than $\epsilon$ with probability at least $1-\delta$.
Note that algorithms with smaller regret make fewer mistakes in total, but they could make several severe mistakes. In comparison, algorithms with smaller sample complexity of exploration do not make severe mistakes, but they may suffer from a large number of mistakes in total. Therefore, these two quantities are not directly comparable. For any algorithm with $\tilde O(C\epsilon^{-a})$ sample complexity of exploration, where $C$ is a problem dependent constant (e.g., it may depend on $|\cS|,|\cA|,\gamma,d$), we can do a conversion and show that the algorithm also enjoys a $\tilde O(C^{1/(a+1)}(1-\gamma)^{-1/(a+1)} T^{a/(a+1)})$ regret for the first $T$ rounds. The proof is deferred to Appendix~\ref{app:conversion}.
More comparisons and discussions can also be found in \citet{liu2020regret} for the tabular setting. 
 

\section{The Proposed Algorithm} \label{section 4}

\begin{algorithm*}[t]
	\caption{Upper-Confidence Linear Kernel Reinforcement Learning ($\algname$)}\label{algorithm}
	\begin{algorithmic}[1]
	\REQUIRE Regularization parameter $\lambda$, confidence radius $\beta$, number of value iteration rounds $U$, time horizon $T$
	\STATE	Receive $s_1$ 
	\STATE Set $t \leftarrow 1$, $\bSigma_1 \leftarrow \lambda\Ib$, $\bbb_1 = \zero$
	\FOR{$k=0,\ldots$}
	\STATE Set $t_k \leftarrow t$, $\hat\btheta_k \leftarrow \bSigma_{t_k}^{-1}\bbb_{t_k}$\alglinelabel{algorithm:line1}
\STATE Set $\cC_k$ and $\qvalue_k(\cdot, \cdot)$ as follows:
\begin{align}
\cC_k = \{\btheta:\|\bSigma_{t_k}^{1/2}(\btheta - \hat\btheta_{k})\|_2 \leq \beta\},\ \qvalue_k(\cdot, \cdot)\leftarrow \valueite(\cC_k, U)\notag
\end{align}
\STATE Set $\vvalue_k(\cdot) \leftarrow \max_{a \in \cA}\qvalue_k(\cdot,a)$\alglinelabel{algorithm:line2}
	\REPEAT\alglinelabel{algorithm:line3}
	\STATE Set $\pi_t(\cdot)\leftarrow \argmax_a \qvalue_k(\cdot, a)$, take action $a_t  \leftarrow \pi_t(s_t)$, receive $s_{t+1} \sim \PP(\cdot|s_t, a_t)$\alglinelabel{algorithm:line99}
	\STATE Set $\bSigma_{t+1} \leftarrow \bSigma_{t} + \bphi_{{\vvalue}_k}(s_t,a_t)\bphi_{{\vvalue}_k}(s_t,a_t)^\top$
	\STATE Set $\bbb_{t+1} \leftarrow \bbb_{t} + \bphi_{\vvalue_k}(s_t, a_t)\vvalue_k(s_{t+1})$
\STATE $t \leftarrow t+1$
	\UNTIL{$\text{det}(\bSigma_{t}) > 2\text{det}(\bSigma_{t_k})$}\alglinelabel{algorithm:line4}
	\ENDFOR
	\end{algorithmic}
\end{algorithm*}

\begin{algorithm}[t]
	\caption{Extended Value Iteration: $\valueite(\cC, U)$}\label{algorithm:2}
	\begin{algorithmic}[1]
	\REQUIRE Confidence set $\cC$, number of value iteration rounds $U$
\STATE Let $\qvalue^{(0)}(\cdot,\cdot ) = 1/(1-\gamma)$.
\STATE $Q(\cdot,\cdot) \leftarrow Q^{(0)}(\cdot,\cdot)$
\IF {$\cC \cap \cB\neq \emptyset$}
\FOR{$u=1,\ldots, U$}
	\STATE Let $\vvalue^{(u-1)}(\cdot) = \max_{a \in \cA}\qvalue^{(u-1)}(\cdot, a)$ and
	\begin{align}
	    \qvalue^{(u)}(\cdot, \cdot)\leftarrow  \reward(\cdot, \cdot) + \gamma \max_{\btheta \in \cC\cap \cB} \big\la \btheta, \bphi_{\vvalue^{(u-1)}}(\cdot, \cdot)\big\ra\label{eq:onestepv}
	\end{align}
	\ENDFOR
	\STATE Let $\qvalue(\cdot, \cdot) \leftarrow \qvalue^{(U)}(\cdot, \cdot)$
\ENDIF
	\ENSURE $\qvalue(\cdot,\cdot)$
	\end{algorithmic}
\end{algorithm}

In this section, we propose an algorithm namely $\algname$ to learn the linear kernel MDP, which is illustrated in Algorithm \ref{algorithm}. $\algname$ is essentially a multi-epoch algorithm inspired by \citet{jaksch2010near, lattimore2012pac}. Specifically, the $k$-th epoch of Algorithm \ref{algorithm} starts at round $t_k$ and ends at round $t_{k+1}-1$. The length of each epoch is not prefixed but depends on previous observations. 
In each epoch, $\algname$ uses Extended Value Iteration ($\valueite$) function to compute the estimated optimal action-value function $\qvalue_k$ and selects the greedy policy according to the function.
The reason for using adaptive epoch length is that it can control the amount of ``switching error'' which occurs when the policy is updated.  Each epoch of $\algname$ can be divided into two phases, which we will discuss in detail in the sequel.

\noindent\textbf{Planning phase (Line \ref{algorithm:line1} to \ref{algorithm:line2})} 
 Planning phase is executed at the beginning of each epoch. In this phase, $\algname$ first computes $\hat\btheta_k$ as the estimate of $\btheta^*$, which is the minimizer of the following regularized least-square problem: 
  \begin{align}
    \hat\btheta_{k} &\leftarrow \argmin_{\btheta \in \RR^d}\sum_{j=0}^{k-1}\sum_{i=t_j}^{t_{j+1}-1} \big[\big\la \btheta, \bphi_{{\vvalue}_j}(s_i,a_i)\big\ra - {\vvalue}_j(s_{i+1})\big]^2 + \lambda\|\btheta\|_2^2,\label{def:leastsquare}
\end{align}
which has a closed-form solution as shown in Line \ref{algorithm:line1}. 
Then Algorithm \ref{algorithm} computes the confidence set of $\btheta^*$ as $\cC_k$ based on the confidence radius parameter $\beta$. Based on the confidence set $\cC_k$, Algorithm~\ref{algorithm} uses Algorithm \ref{algorithm:2} to compute the action-value functions $\qvalue_k$ for the next step. 


\noindent\textbf{Extended value iteration} 
Algorithm \ref{algorithm} makes use of $\valueite$ in Algorithm \ref{algorithm:2} to compute the action-value function corresponding to the near-optimal MDP among all the plausible MDPs $\cM_k$ induced by $\cC_k$.
In detail, besides $\cC_k$, $\valueite$ needs to access an additional set $\cB$ defined as follows:
\begin{align}
    \cB = \Big\{\btheta: \forall (s,a),\ \la \bphi(\cdot|s,a), \btheta\ra\text{ is a probability distribution}\Big\}.\notag
\end{align}
The intuition of introducing set $\cB$ is that since $\btheta^* \in \cB$, then $\cC_k \cap \cB$ is a tighter confidence set of $\btheta^*$.  In addition, $\cB$ is a convex set since it is easy to verify that: for any $\btheta_1,\btheta_2 \in \cB$, and any $\alpha \in [0,1]$, we have $\alpha\btheta_1 + (1-\alpha)\btheta_2$ belongs to $\cB$. $\cB$ contains all possible $\btheta^*$, which can be uniquely decided by the MDP class $\cM$. For instance, when $\cM$ is the global convex combination MDP class \citep{modi2019sample}, $\cB$ is a $d$-dimensional simplex. 
At each iteration of Algorithm~\ref{algorithm:2}, to obtain the new action-value function $\qvalue^{(u)}$, $\valueite$ performs one-step optimal value iteration \eqref{eq:onestepv} by selecting the best possible MDP $\tilde M$ among $\cM$ to maximize the Bellman backup over the previous value function $\vvalue^{(u-1)}$. This can be illustrated as follows:
\begin{align}
	    \qvalue^{(u)}(\cdot, \cdot)
	    &\leftarrow \reward(\cdot, \cdot) + \gamma \max_{\btheta \in \cC \cap \cB} \big\la \btheta, \bphi_{\vvalue^{(u-1)}}(\cdot, \cdot)\big\ra= \reward(\cdot, \cdot) + \gamma \max_{\tilde M \in \cM} \big[\tilde \PP\vvalue^{(u-1)}\big](\cdot, \cdot) .\notag
\end{align}
$\valueite$ returns the last action-value function as its output and sets $\qvalue_k = \qvalue^{(U)}$. 

\noindent\textbf{Execution phase (Line \ref{algorithm:line3} to \ref{algorithm:line4})} Execution phase is used to execute the policy in each epoch, collect observations, and update parameters. At round $t$, Algorithm \ref{algorithm} follows the greedy policy $\pi_t$ induced by $\qvalue_k$ to take the action $\pi_t(s_t)$ and observes the new state $s_{t+1}$. Algorithm \ref{algorithm} then computes vector $\bphi_{\vvalue_k}(s_t, a_t)$ according to Definition \ref{def:bphi} and the value function at $s_{t+1}$, i.e., $\vvalue_k(s_{t+1})$. Next, Algorithm \ref{algorithm} updates parameters $\bSigma_t$ and $\bbb_t$ by $\bphi_{\vvalue_k}(s_t, a_t)$. The loop repeats until 
$\text{det}(\bSigma_{t})>2\text{det}(\bSigma_{t_k})$. This is the same as the stopping criterion used by Rarely Switching OFUL in \citet{abbasi2011improved}. 

\noindent\textbf{Implementation of Algorithms \ref{algorithm} and \ref{algorithm:2}} 
There are two main implementation issues in Algorithms \ref{algorithm} and \ref{algorithm:2}. First, we need to compute the integration $\bphi_{\vvalue}(s,a)$ efficiently. Second, for Algorithm \ref{algorithm:2}, we need to compute $\qvalue(\cdot, \cdot)$ from $\valueite$ efficiently. Both of them can be efficiently achieved by Monte Carlo integration in some special cases, and we deferred the details to the appendix. Finally, it is worth noting that $\algname$ is an online reinforcement learning algorithm as it does not need to store all the past observations. $\algname$ only needs to maintain a vector $\bbb_t$ and a matrix $\bSigma_t$, which costs $O(d^2)$ space complexity. 



\section{Main Theory} \label{section 5}
In this section, we provide the theoretical analysis of Algorithm \ref{algorithm}. We introduce a shorthand notation $\text{Regret}(T)$ for $\text{Regret}(\algname, M_{\btheta^*}, T)$, when there is no confusion.
We present our main theorem, which gives an upper bound of the regret for Algorithm \ref{algorithm}.
\begin{theorem}\label{thm:regret}
Let $M_{\btheta^*}$ be the underlying linear kernel MDP. If we set $\beta$ and $U$ in Algorithm \ref{algorithm} as follows:
\begin{align}
    &\beta = \frac{1}{1-\gamma}\sqrt{d\log\frac{\lambda(1-\gamma)^2+Td}{\delta\lambda(1-\gamma)^2}} + \sqrt{\lambda d},\ U = \bigg\lceil\frac{\log (T/(1-\gamma))}{1-\gamma} \bigg\rceil\label{eq:betau},
\end{align}
then with probability at least $1-2\delta$, we have
\begin{align}
   \text{Regret}(T) &\leq \frac{6\beta}{1-\gamma}\sqrt{dT\log\frac{\lambda+T/(1-\gamma)^2}{\lambda}}+ \frac{5}{(1-\gamma)^2}\notag \\
    & \qquad + \frac{3\sqrt{T\log 1/\delta}}{(1-\gamma)^2} +\frac{3d}{(1-\gamma)^2}\log \frac{2\lambda +Td}{\lambda(1-\gamma)^2} .\label{eq: finalregret}
\end{align}
\end{theorem}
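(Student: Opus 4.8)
The plan is to follow the optimism-in-the-face-of-uncertainty template, adapted to the discounted, infinite-horizon, adaptive-epoch setting, organizing the argument around three pillars: a concentration bound keeping $\btheta^*$ inside every confidence set, an optimism claim that the iterates $\vvalue_k$ dominate $\vvalue^*$, and a discounted regret decomposition that telescopes the per-round suboptimality into a self-normalized sum. As a first step I would show that with probability at least $1-\delta$ the true parameter obeys $\|\bSigma_{t_k}^{1/2}(\hat\btheta_k-\btheta^*)\|_2\le\beta$ for every epoch $k$, so that $\btheta^*\in\cC_k\cap\cB$. The key observation is that the regressor $\bphi_{\vvalue_j}(s_i,a_i)$ is built from $\vvalue_j$, which is frozen at the start of epoch $j$ and hence measurable with respect to the history before time $i$; consequently the residual $\vvalue_j(s_{i+1})-[\PP\vvalue_j](s_i,a_i)$ is a conditionally mean-zero martingale difference bounded by $1/(1-\gamma)$. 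This is exactly the structure required by the self-normalized concentration inequality for vector-valued martingales \citep{abbasi2011improved}, and --- crucially, unlike the linear MDP setting --- no covering of a value-function class is needed. Plugging in $\|\bphi_{\vvalue_j}\|_2\le\sqrt{d}/(1-\gamma)$ and $\|\btheta^*\|_2\le\sqrt{d}$ yields the stated $\beta$.

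Next I would establish exact optimism, $\vvalue_k(s)\ge\vvalue^*(s)$ for all $s$. Since $\btheta^*\in\cC_k\cap\cB$, the true kernel is one of the models over which $\valueite$ maximizes, so the extended Bellman operator dominates the ordinary optimality operator $\bellman$. Initializing $\qvalue^{(0)}=1/(1-\gamma)$ above the fixed point and using monotonicity and the $\gamma$-contraction of the extended operator, the iterates $\vvalue^{(u)}$ decrease monotonically toward the optimistic fixed point, which itself lies above $\vvalue^*$; hence $\vvalue_k=\vvalue^{(U)}\ge\vvalue^*$ with no truncation loss on the optimism side.

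The heart of the proof is the regret decomposition. Setting $k:=k(t)$ for the epoch containing $t$, optimism gives $\Delta_t=\vvalue^*(s_t)-\vvalue^\pi_t(s_t)\le\vvalue_k(s_t)-\vvalue^\pi_t(s_t)=:\delta_t$. Writing $\vvalue_k(s_t)=\qvalue_k(s_t,a_t)$ for the greedy action, substituting the $\valueite$ update, using $[\PP\vvalue_k](s_t,a_t)=\la\btheta^*,\bphi_{\vvalue_k}(s_t,a_t)\ra$, and inserting the next-state martingale difference, I obtain
\[
\delta_t\le\gamma\la\tilde\btheta_k-\btheta^*,\bphi_{\vvalue_k}(s_t,a_t)\ra+\gamma\xi_t+\gamma\delta_{t+1}+\gamma s_t+\iota_U,
\]
where $\xi_t$ is a bounded martingale difference, $s_t$ is a value-gap term nonzero only at epoch boundaries, and $\iota_U$ is the value-iteration truncation error. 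Here I would exploit that every $\btheta\in\cB$ makes $\la\bphi(\cdot|s,a),\btheta\ra$ a genuine probability measure, so replacing $\vvalue^{(U-1)}$ by $\vvalue^{(U)}$ costs only $\gamma\|\vvalue^{(U-1)}-\vvalue^{(U)}\|_\infty\le\gamma^U/(1-\gamma)$ --- with no spurious factor of $d$ --- which the choice $U=\lceil\log(T/(1-\gamma))/(1-\gamma)\rceil$ drives below $1/T$. Summing over $t$ and absorbing $\sum_t\delta_{t+1}$ into $\sum_t\delta_t$ (paying a single $\delta_{T+1}\le 1/(1-\gamma)$ boundary term) gives $(1-\gamma)\sum_t\delta_t\le\gamma\sum_t(\cdots)$, which is where the global $1/(1-\gamma)$ prefactor is born.

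Finally I would bound the four sums. The confidence sum is controlled by Cauchy--Schwarz, $\la\tilde\btheta_k-\btheta^*,\bphi_{\vvalue_k}\ra\le 2\beta\|\bphi_{\vvalue_k}\|_{\bSigma_{t_k}^{-1}}$, then the rarely-switching comparison $\|\cdot\|_{\bSigma_{t_k}^{-1}}\le\sqrt{2}\|\cdot\|_{\bSigma_t^{-1}}$ (valid because the stopping rule enforces $\det\bSigma_t\le 2\det\bSigma_{t_k}$, so the determinant ratio bounds the norm ratio), followed by Cauchy--Schwarz in $t$ and the elliptical potential lemma \citep{abbasi2011improved}; this yields the leading $\frac{6\beta}{1-\gamma}\sqrt{dT\log(\cdots)}$ term. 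The martingale sum is handled by Azuma--Hoeffding with increments bounded by $2/(1-\gamma)$, producing the $\frac{3\sqrt{T\log(1/\delta)}}{(1-\gamma)^2}$ term; the switching sum is at most (number of epochs)$\times\frac{1}{1-\gamma}$, and the determinant-doubling criterion caps the number of epochs at $O(d\log(\cdots))$, producing the $\frac{3d}{(1-\gamma)^2}\log(\cdots)$ term; the truncation and boundary contributions collect into the $O((1-\gamma)^{-2})$ constant. The main obstacle I anticipate is getting this discounted decomposition exactly right: unlike finite-horizon analyses there is no natural horizon to telescope over, so one must carefully solve the self-referential recursion $\sum_t\delta_t\le\gamma\sum_t\delta_t+(\cdots)$ while simultaneously accounting for the epoch-switching value gaps and the value-iteration truncation, ensuring that no stray factor of $d$ or $(1-\gamma)^{-1}$ leaks into the wrong term.
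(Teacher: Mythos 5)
Your proposal is correct and follows essentially the same route as the paper: the self-normalized concentration bound of \citet{abbasi2011improved} for the confidence sets, monotone optimism of the extended value iteration initialized at $1/(1-\gamma)$, a per-round Bellman recursion whose sum yields the self-referential inequality $\text{Regret}'(T)\le\gamma\,\text{Regret}'(T)+(\cdots)$ solved by dividing by $1-\gamma$, with the confidence sum controlled via the determinant-ratio comparison and the elliptical potential lemma, the martingale sum by Azuma--Hoeffding, and the epoch-boundary terms by the $O(d\log(\cdot))$ bound on the number of epochs. The only cosmetic difference is your $\gamma^U/(1-\gamma)$ truncation bound versus the paper's $2\gamma^U$ from contracting $\|\qvalue^{(1)}-\qvalue^{(0)}\|_\infty\le 2$, which is immaterial for the final rate.
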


Theorem \ref{thm:regret} suggests that the regret of Algorithm \ref{algorithm} is in the order of $\tilde O(d\sqrt{T}/(1-\gamma)^2)$.

\begin{remark}
Several aspects of Theorem \ref{thm:regret} are worth to comment. Thanks to the feature mapping $\bphi$ and the multi-epoch nature of Algorithm \ref{algorithm}, the regret bound \eqref{eq: finalregret} in Theorem \ref{thm:regret} is independent of $|\cS|$ and $|\cA|$, which suggests that $\algname$ is sample efficient even for MDPs with large state and action spaces. This is in sharp contrast to the tabular RL algorithms, whose regret bound or sample complexity depends on $|\cS|$ and $|\cA|$ polynomially. Moreover, the exploration parameter $\beta$ and the number of extended value iteration rounds $U$ depend on $T$ logarithmically. For the case where $T$ is unknown, we can use the  ``doubling trick'' \citep{besson2018doubling} to learn $T$ adaptively, and it will only increase the regret \eqref{eq: finalregret} by a constant factor.  
\end{remark}

\begin{remark}
For the tabular MDPs, $\algname$ uses the feature mapping in Example \ref{exp: tabular} with a $|\cS|^2|\cA|$-dimension feature mapping. In that case, $\algname$ has a $|\cS|^2|\cA|\sqrt{T}/(1-\gamma)^2$ regret according to Theorem \ref{thm:regret}, which is worse than that of \citet{liu2020regret} considering the dependence of $|\cS|$ and $|\cA|$. There is no contradiction here, as in this paper, we aim to deliver a generic RL algorithm for linear kernel MDPs, which is a strictly larger class of MDPs than tabular MDPs. In fact, the regret bound in Theorem \ref{thm:regret} can be improved by providing a tighter confidence set $\cC_k$ specialized to the tabular MDP case. This is beyond the focus of this paper, and we leave it in the future work. 
\end{remark}

In addition to the upper bound result, we also prove the lower bound result. The following theorem shows a lower bound for any algorithm to learn a linear kernel MDP. 
\begin{theorem}\label{thm:1}
Suppose $\gamma \geq 2/3, d \geq 2$ and $T \geq \max\{d^2/225, 5\gamma\}/(1-\gamma)$. Then for any policy $\pi$, there exists a linear kernel MDP $M_{\tilde\btheta}$ such that
\begin{align}
    \EE \big[\text{Regret}(\pi, M_{\tilde\btheta}, T)\big] \geq \frac{\gamma d\sqrt{T}}{1600c(1-\gamma)^{1.5}}- \frac{\gamma}{(1-\gamma)^2}.\label{eq:finallower}
\end{align}
\end{theorem}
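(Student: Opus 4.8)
The plan is to establish the lower bound by a minimax argument: I exhibit a finite family of hard linear kernel MDPs, indexed by a hidden sign vector, over which no single policy can be uniformly competitive, and then invoke a coordinatewise (Assouad-type) reduction to $d-1$ two-point hypothesis tests. The hard instance needs only two states: a rewarding state $x_1$ (reward $1$) and a second state $x_2$ (reward $0$), with $x_2$ transitioning back into $x_1$ quickly so that the trajectory spends a constant fraction of its time at $x_1$. The unknown parameter is a sign vector $\tilde\btheta \in \{-1,+1\}^{d-1}$, embedded in $\RR^d$ together with one fixed coordinate so that $\|\tilde\btheta\|_2 \le \sqrt d$, and the action at $x_1$ selects a vector in $\{-1,+1\}^{d-1}$. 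I choose $\bphi$ so that the probability of remaining at $x_1$ equals $p_0 + \eta\,\la \bphi_a, \tilde\btheta\ra$, where $1 - p_0 = \Theta(1-\gamma)$ keeps $x_1$ long-lived and $\eta$ is a small perturbation scale to be tuned. One checks directly that this defines a valid linear kernel MDP obeying Definition~\ref{assumption-linear} and the norm bound on $\bphi_{\vvalue}$, and that the optimal action at $x_1$ is exactly the one whose sign pattern matches $\tilde\btheta$.

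First I would translate regret into an identification cost. Solving the Bellman equation on this two-state chain gives $\vvalue^*(x_1) = \Theta(1/(1-\gamma))$ and, crucially, a value sensitivity $\partial \vvalue(x_1)/\partial p = \Theta(1/(1-\gamma)^2)$ near $p \approx 1$, because $1 - \gamma p = \Theta(1-\gamma)$. Since flipping the sign of a single coordinate of the chosen action shifts the staying probability by $\Theta(\eta)$, each coordinate on which $a_t$ disagrees with $\tilde\btheta$ inflates the per-step suboptimality by $\Theta(\gamma\eta/(1-\gamma)^2)$. Summing over rounds yields a bound of the form $\EE[\text{Regret}] \ge c_1\,\frac{\gamma\eta}{(1-\gamma)^2}\sum_{i=1}^{d-1}\EE[N_i]$, where $N_i$ counts the rounds in which the action has the wrong sign in coordinate $i$; the factor $(1-\gamma)^{-2}$ is where the value scale and its sensitivity enter.

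The heart of the argument is a per-coordinate information bound. For each $i$, I compare the instances $M_{\tilde\btheta}$ and $M_{\tilde\btheta^{(i)}}$ that differ only by flipping the $i$-th sign. By the chain rule for KL divergence along the observed trajectory $s_1,a_1,\dots,s_T$, and because only the one-step kernel at $x_1$ differs between the two instances, the divergence between the two trajectory laws is bounded by the sum over $t$ of the expected one-step KL divergences at the visited state--action pairs. Since the relevant Bernoulli parameter sits at $p \approx 1$ with $1-p = \Theta(1-\gamma)$, each such one-step divergence is $\Theta(\eta^2/(1-\gamma))$, so the total is $\Theta(T\eta^2/(1-\gamma))$. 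Pinsker's inequality then shows the learner cannot reliably determine $\mathrm{sign}(\tilde\theta_i)$, hence $\EE[N_i] \ge c_2 T$, as long as $T\eta^2/(1-\gamma)$ is a small constant. Averaging over the hypercube and summing the $d-1$ coordinates gives $\EE[\text{Regret}] \ge c_3\,\frac{\gamma\eta}{(1-\gamma)^2}(d-1)T$; choosing $\eta \asymp \sqrt{(1-\gamma)/T}$ (the largest value respecting the information constraint) produces the advertised $\Omega\big(\gamma d\sqrt T/(1-\gamma)^{1.5}\big)$. The hypotheses $T \ge d^2/(225(1-\gamma))$ and $\gamma \ge 2/3$ are exactly what guarantee that this $\eta$ is small enough that the total fluctuation $(d-1)\eta$ of the staying probability stays within $\Theta(1-\gamma)$, so every perturbed kernel remains a legitimate probability distribution.

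The main obstacle is the discounted, non-i.i.d. structure. Unlike a linear bandit, the data form a single Markov trajectory whose visitation is itself steered by the learner's (possibly wrong) past actions, so both the per-step value gaps and the per-step KL contributions must be controlled along the realized path rather than under a fixed i.i.d. design; in particular one must argue that $x_1$ is visited order $T$ times under any policy. Equally delicate is pinning down the exact exponent $(1-\gamma)^{-1.5}$: it emerges from the interplay between the value sensitivity $\Theta((1-\gamma)^{-2})$ and the per-step information $\Theta(\eta^2(1-\gamma)^{-1})$, and getting either horizon power wrong would change the final rate. Carrying this bookkeeping while simultaneously verifying throughout that the constructed kernels remain valid (nonnegative, summing to one) linear kernel MDPs is the technical core of the proof.
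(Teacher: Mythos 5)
Your overall strategy coincides with the paper's: a two-state hard instance whose transition kernel is perturbed along a $\{-1,+1\}^{d-1}$ hypercube, an Assouad-type reduction to $d-1$ coordinatewise sign tests, a KL chain rule along the trajectory combined with Pinsker's inequality, and the tuning $\eta \asymp \sqrt{(1-\gamma)/T}$; the paper places the action-dependence at the zero-reward state (the action controls the escape probability $\delta+\la\ab,\btheta\ra$ with $\delta=1-\gamma$) rather than at the rewarding state, but this is a mirror-image of your construction and all the scalings match.

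There is, however, one point where your construction as written could fail, and it is exactly the exponent you flag as delicate. You say the zero-reward state returns to the rewarding state ``quickly,'' and you justify the value sensitivity $\partial \vvalue(\state_1)/\partial p = \Theta((1-\gamma)^{-2})$ by the single-state heuristic $1-\gamma p = \Theta(1-\gamma)$. In a two-state chain with returns, that heuristic is not the right reason: solving the Bellman system gives $\partial \vvalue(\state_1)/\partial p = \gamma(1-\alpha)\vvalue(\state_1)^2$ where $\alpha = \gamma q/(1-\gamma+\gamma q)$ and $q$ is the return probability from the zero-reward state. If the return is genuinely quick, $q=\Theta(1)$, then $1-\alpha=\Theta(1-\gamma)$, the value gap between the two states collapses to $\Theta(1)$, the sensitivity drops to $\Theta((1-\gamma)^{-1})$, and your final bound degrades to $\Omega(d\sqrt{T}/(1-\gamma)^{0.5})$. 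What is needed is $q=\Theta(1-\gamma)$ --- the return must be \emph{slow}, taking a full effective horizon, so that leaving the rewarding state costs $\Theta(1/(1-\gamma))$ in value; this still yields a constant fraction of time at the rewarding state because the leave rate is also $\Theta(1-\gamma)$. The paper makes precisely this choice ($\PP_{\btheta}(\state_1|\state_0,\ab)=\delta+\la\ab,\btheta\ra$ with $\delta=1-\gamma$). Once you fix the return rate to $\Theta(1-\gamma)$, the rest of your argument goes through as in the paper; you would also need the small boundary correction (the paper's Lemma \ref{lemma:regrettrans}) that converts the paper's regret notion into a trajectory-reward deficit and produces the $-\gamma/(1-\gamma)^2$ term in \eqref{eq:finallower}.
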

\begin{remark}
Theorem \ref{thm:1} suggests that when $T$ is large enough, the lower bound of regret \eqref{eq:finallower} is $\Omega(d\sqrt{T}/(1-\gamma)^{1.5})$. Compared with the upper regret bound $\tilde O(d\sqrt{T}/(1-\gamma)^2)$, we can conclude that $\algname$ has an optimal dependence on the feature mapping dimension $d$ and the time horizon $T$, and the dependence on the discount factor is only worse than the lower bound by a $(1-\gamma)^{-0.5}$ factor. 
\end{remark}

\section{Proof Sketch of the Main Theory}\label{sec:sketch}
In this section, we provide the proof sketches of the upper and lower bounds on the regret. The complete proofs are deferred to the appendix.
\subsection{Proof Sketch of Theorem \ref{thm:regret}}\label{sec:sketchregret}
In this section we prove Theorem \ref{thm:regret}. Let $K(T)-1$ be the number of epochs when Algorithm \ref{algorithm} executes $t = T$ rounds, and $t_{K(T)} = T+1$. We have the following technical lemmas.

\begin{lemma}\label{lemma:theta-ball}
Let $\beta$ be defined in \eqref{eq:betau}. Then with probability at least $1-\delta$, for all $0 \leq k\leq K(T)-1$, we have $\cC_k \cap \cB$ is non-empty and $\btheta^* \in \cC_k \cap \cB$.
\end{lemma}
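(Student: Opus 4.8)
The plan is to reduce the statement to a self-normalized concentration inequality for vector-valued martingales, in the spirit of \citet{abbasi2011improved} as adapted to the linear kernel setting of \citet{jia2020model,ayoub2020model}. The key observation is that the quantities accumulated in $\bSigma_t$ and $\bbb_t$ form a regularized least-squares regression whose true regression vector is $\btheta^*$. Let $\cF_i$ denote the $\sigma$-algebra generated by $s_1,a_1,\dots,s_i,a_i$. For a round $i$ in epoch $j$ (i.e.\ $t_j\le i<t_{j+1}$), the value function $\vvalue_j$ is computed at the start of epoch $j$ from data in epochs $0,\dots,j-1$, hence it is $\cF_{t_j}$-measurable and the feature $\bphi_{\vvalue_j}(s_i,a_i)$ is $\cF_i$-measurable. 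Define the noise $\eta_i = \vvalue_j(s_{i+1}) - \big\la \bphi_{\vvalue_j}(s_i,a_i),\btheta^*\big\ra$. Using $\PP(s'|s,a)=\la\bphi(s'|s,a),\btheta^*\ra$ and $\bphi_{\vvalue_j}(s,a)=\sum_{s'}\bphi(s'|s,a)\vvalue_j(s')$, one has $\big\la\bphi_{\vvalue_j}(s_i,a_i),\btheta^*\big\ra=[\PP\vvalue_j](s_i,a_i)=\EE[\vvalue_j(s_{i+1})\mid\cF_i]$, so $\{\eta_i\}$ is a martingale difference sequence adapted to $\{\cF_i\}$ with predictable regressors.

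Next I would record the three quantitative ingredients. Since rewards lie in $[0,1]$, every value function satisfies $\vvalue_j\in[0,1/(1-\gamma)]$, so $|\eta_i|\le 1/(1-\gamma)$ and $\eta_i$ is conditionally $1/(1-\gamma)$-sub-Gaussian; by the second bullet of Definition \ref{assumption-linear} with $R=1/(1-\gamma)$ the feature norm obeys $\|\bphi_{\vvalue_j}(s_i,a_i)\|_2\le\sqrt d/(1-\gamma)$; and $\|\btheta^*\|_2\le\sqrt d$. Observing that $\hat\btheta_k=\bSigma_{t_k}^{-1}\bbb_{t_k}$ is exactly the ridge estimator \eqref{def:leastsquare} built from the responses $\vvalue_j(s_{i+1})$ and the predictable features $\bphi_{\vvalue_j}(s_i,a_i)$, I would apply the self-normalized bound of \citet{abbasi2011improved} to get, uniformly over all $k$ with probability at least $1-\delta$,
$$\big\|\bSigma_{t_k}^{1/2}(\hat\btheta_k-\btheta^*)\big\|_2 \le \frac{1}{1-\gamma}\sqrt{d\log\frac{\lambda(1-\gamma)^2+Td}{\delta\lambda(1-\gamma)^2}}+\sqrt{\lambda d}=\beta,$$
where the logarithmic term comes from substituting the feature-norm bound $L=\sqrt d/(1-\gamma)$ and $t_k\le T+1$ into the generic radius $\tfrac{1}{1-\gamma}\sqrt{d\log((1+t_kL^2/\lambda)/\delta)}+\sqrt{\lambda d}$. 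This is precisely $\beta$ from \eqref{eq:betau}, giving $\btheta^*\in\cC_k$ for every $0\le k\le K(T)-1$.

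It then remains to handle $\cB$. Because $\PP(\cdot|s,a)=\la\bphi(\cdot|s,a),\btheta^*\ra$ is a genuine transition kernel, $\la\bphi(\cdot|s,a),\btheta^*\ra$ is a probability distribution for every $(s,a)$, i.e.\ $\btheta^*\in\cB$ holds deterministically. Combining this with $\btheta^*\in\cC_k$ on the high-probability event yields $\btheta^*\in\cC_k\cap\cB$, so the intersection is in particular non-empty, which establishes both claims.

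The main obstacle is ensuring the concentration argument is legitimate despite the feature map being data-dependent: the regressors $\bphi_{\vvalue_j}(s_i,a_i)$ change from epoch to epoch and depend on past observations through $\vvalue_j$. The crucial point, which I would justify carefully, is that $\vvalue_j$ is frozen at the start of epoch $j$ and is therefore $\cF_{t_j}$-measurable, so the features remain $\cF_i$-predictable and the self-normalized bound — which requires only adaptedness, not a fixed feature map — applies; moreover its anytime (stopping-time) formulation delivers the required uniformity over all epochs simultaneously, which is essential since the epoch boundaries $t_k$ are themselves random.
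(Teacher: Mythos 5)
Your proposal is correct and follows essentially the same route as the paper's proof: both identify $\vvalue_j(s_{i+1})-\la\bphi_{\vvalue_j}(s_i,a_i),\btheta^*\ra$ as a $1/(1-\gamma)$-sub-Gaussian martingale difference sequence with predictable regressors, invoke the self-normalized bound (Theorem 2 of \citet{abbasi2011improved}) with $L=\sqrt{d}/(1-\gamma)$ and $\|\btheta^*\|_2\le\sqrt{d}$ to obtain the radius $\beta$, and observe that $\btheta^*\in\cB$ holds deterministically. Your additional care about the $\cF_{t_j}$-measurability of $\vvalue_j$ and the anytime uniformity over random epoch boundaries is a welcome elaboration of points the paper leaves implicit.
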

Lemma \ref{lemma:theta-ball} suggests that in every epoch of Algorithm \ref{algorithm}, $\btheta^*$ is contained in the confidence sets $\{\cC_k\cap \cB\}_{k=0}^{K(T)-1}$ with a high probability.

\begin{lemma}\label{lemma:UCB}
Let the event in Lemma \ref{lemma:theta-ball} hold. Then for all $0 \leq k \leq K(T)-1$, we have $1/(1-\gamma) \geq \qvalue_k(s,a) \geq \qvalue^*(s,a)$ for any $(s,a) \in \cS \times \cA$. 
\end{lemma}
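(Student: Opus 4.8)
The plan is to prove both inequalities by induction on the value-iteration counter $u$ inside $\valueite(\cC_k, U)$ (Algorithm \ref{algorithm:2}), exploiting that under the event of Lemma \ref{lemma:theta-ball} the true parameter $\btheta^*$ lies in $\cC_k \cap \cB$, so the maximization in \eqref{eq:onestepv} is always taken over a nonempty set that contains $\btheta^*$. Fix an epoch $k$ and recall the iterates $\qvalue^{(u)}$ produced by Algorithm \ref{algorithm:2}, with $\qvalue^{(0)} \equiv 1/(1-\gamma)$ and $\qvalue_k = \qvalue^{(U)}$. I would establish two invariants simultaneously for every $(s,a)$ and every $u$: (i) $\qvalue^{(u)}(s,a) \leq 1/(1-\gamma)$, and (ii) $\qvalue^{(u)}(s,a) \geq \qvalue^*(s,a)$.

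For the upper bound (i), the base case is immediate. For the inductive step I would use the crucial fact that for any $\btheta \in \cB$ the map $\la \bphi(\cdot|s,a), \btheta\ra$ is a genuine probability distribution, so that $\la \btheta, \bphi_{\vvalue^{(u-1)}}(s,a)\ra = \sum_{s'}\la \bphi(s'|s,a), \btheta\ra \vvalue^{(u-1)}(s')$ is a convex average of the values $\vvalue^{(u-1)}(s')$ and is therefore at most $\max_{s'}\vvalue^{(u-1)}(s') \leq 1/(1-\gamma)$ by the inductive hypothesis. Combining with $\reward \leq 1$ yields $\qvalue^{(u)}(s,a) \leq 1 + \gamma/(1-\gamma) = 1/(1-\gamma)$.

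For the optimism bound (ii), the base case again follows from $\reward \leq 1$ and the geometric-series estimate $\qvalue^* \leq 1/(1-\gamma)$. For the inductive step, the hypothesis $\qvalue^{(u-1)} \geq \qvalue^*$ gives $\vvalue^{(u-1)} \geq \vvalue^*$ pointwise since the maximum over actions preserves the inequality. Because $\btheta^* \in \cC_k \cap \cB$, I would lower-bound the maximum in \eqref{eq:onestepv} by its value at $\btheta^*$, namely $\la \btheta^*, \bphi_{\vvalue^{(u-1)}}(s,a)\ra = [\PP\vvalue^{(u-1)}](s,a)$, where the identity uses the defining relation $\PP(s'|s,a) = \la \bphi(s'|s,a), \btheta^*\ra$. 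Monotonicity of $\PP$ (its entries are nonnegative) then yields $[\PP\vvalue^{(u-1)}](s,a) \geq [\PP\vvalue^*](s,a)$, and the Bellman optimality equation $\qvalue^* = \reward + \gamma\,\PP\vvalue^*$ closes the induction. Taking $u = U$ gives the claim for $\qvalue_k$.

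The main point, rather than any hard estimate, is to keep the two invariants coupled through the single iteration map and to invoke the two halves of Lemma \ref{lemma:theta-ball} in exactly the right places: the membership $\btheta^* \in \cC_k \cap \cB$ is what licenses the optimistic lower bound in step (ii), while the constraint $\btheta \in \cB$ (validity as a transition law) is what enforces the ceiling $1/(1-\gamma)$ in step (i), and the nonemptiness of $\cC_k \cap \cB$ is what makes the iteration well defined in the first place.
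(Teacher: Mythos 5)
Your proposal is correct and matches the paper's own argument: both proceed by induction on the EVI counter $u$, maintaining the two coupled invariants $\qvalue^{(u)} \leq 1/(1-\gamma)$ and $\qvalue^{(u)} \geq \qvalue^*$, using membership of $\btheta^*$ in $\cC_k \cap \cB$ to lower-bound the maximum in \eqref{eq:onestepv} and the fact that any $\btheta \in \cB$ induces a genuine probability distribution to enforce the ceiling. No substantive differences.
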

Lemma \ref{lemma:UCB} suggests that in every epoch of Algorithm \ref{algorithm}, $\qvalue_k(s,a)$ found by $\valueite$ is an upper bound for the optimal action-value function $\qvalue^*(s,a)$.  

Recall that the goal of $\valueite$ is to find the action-value function $\qvalue_k$ corresponding to the optimal MDP in $\cM_k$, which should satisfy the following optimality condition
\begin{align}
    \qvalue_k(s_t,a_t) = \reward(s_t,a_t) + \gamma\max_{\btheta \in \cC_k\cap \cB}\big\la \btheta, \bphi_{\vvalue_k}(s_t, a_t)\big\ra .\notag
\end{align}
However, it is impossible to find the exactly optimal value function since $\valueite$ only performs finite number of iterations. The following lemma characterizes the error of $\valueite$ after $U$ iterations. 

\begin{lemma}\label{lemma:V-diff}
Let the event in Lemma \ref{lemma:theta-ball} hold. Then for any $0 \leq k \leq K(T)-1$ and $t_k \leq t \leq t_{k+1}-1$, there exists a $\btheta_t \in \cC_k \cap \cB$ such that $\qvalue_k(s_t, a_t) \leq \reward(s_t, a_t) + \gamma \big\la \btheta_t, \bphi_{\vvalue_k}(s_t, a_t)\big\ra + 2\gamma^U$.

\end{lemma}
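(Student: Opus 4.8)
The plan is to unfold the last iteration of $\valueite$ and compare the value function it actually uses, $\vvalue^{(U-1)}$, against the value function $\vvalue_k$ appearing in the statement; the gap between the two is exactly a single value-iteration step, which decays geometrically in $U$. Concretely, under the event of Lemma~\ref{lemma:theta-ball} the set $\cC_k \cap \cB$ is non-empty, so Algorithm~\ref{algorithm:2} enters its branch and returns $\qvalue_k = \qvalue^{(U)}$, whence $\vvalue_k = \max_a \qvalue^{(U)}(\cdot, a) =: \vvalue^{(U)}$. First I would let $\btheta_t \in \cC_k \cap \cB$ be the maximizer attaining the max in the $u=U$ update \eqref{eq:onestepv} at the pair $(s_t,a_t)$; it exists because $\cC_k \cap \cB$ is compact (a closed ellipsoid intersected with the closed convex set $\cB$) and the objective is linear in $\btheta$. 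By construction of $\valueite$ this yields the exact identity $\qvalue_k(s_t,a_t) = \reward(s_t,a_t) + \gamma\la \btheta_t, \bphi_{\vvalue^{(U-1)}}(s_t,a_t)\ra$, so the claim reduces to bounding $\gamma\la \btheta_t, \bphi_{\vvalue^{(U-1)}}(s_t,a_t) - \bphi_{\vvalue^{(U)}}(s_t,a_t)\ra \le 2\gamma^U$.

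Next I would exploit $\btheta_t \in \cB$. Since $\bphi_\vvalue(s,a) = \sum_{s'}\bphi(s'|s,a)\vvalue(s')$ is linear in $\vvalue$, the left-hand side equals $\gamma\la \btheta_t, \bphi_W(s_t,a_t)\ra$ with $W = \vvalue^{(U-1)} - \vvalue^{(U)}$, and because $\tilde\PP_t(\cdot|s,a) := \la \bphi(\cdot|s,a), \btheta_t\ra$ is a genuine probability distribution, this is $\gamma\,[\tilde\PP_t W](s_t,a_t) = \gamma\,\EE_{s'\sim \tilde\PP_t(\cdot|s_t,a_t)}W(s')$, bounded in absolute value by $\gamma\|\vvalue^{(U-1)} - \vvalue^{(U)}\|_\infty$. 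Thus it only remains to control the sup-norm gap between two consecutive $\valueite$ iterates.

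For that final step I would show the $\valueite$ operator $\vvalue \mapsto \max_a\{\reward(\cdot,a) + \gamma\max_{\btheta\in\cC_k\cap\cB}\la\btheta, \bphi_\vvalue(\cdot,a)\ra\}$ is a $\gamma$-contraction in $\|\cdot\|_\infty$: for any $\vvalue_1,\vvalue_2$ and any feasible $\btheta$ one has $|\la\btheta, \bphi_{\vvalue_1-\vvalue_2}(s,a)\ra| = |[\tilde\PP(\vvalue_1-\vvalue_2)](s,a)| \le \|\vvalue_1-\vvalue_2\|_\infty$ since $\tilde\PP$ is a distribution, and both the $\max_\btheta$ and the $\max_a$ are $1$-Lipschitz in sup-norm. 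Iterating gives $\|\vvalue^{(U)} - \vvalue^{(U-1)}\|_\infty \le \gamma^{U-1}\|\vvalue^{(1)} - \vvalue^{(0)}\|_\infty$. I would then pin down the base case: since $\vvalue^{(0)}\equiv 1/(1-\gamma)$ and $\btheta\in\cB$ makes $\la\btheta, \bphi_{\vvalue^{(0)}}(s,a)\ra = 1/(1-\gamma)$ identically, we get $\qvalue^{(1)}(s,a) = \reward(s,a) + \gamma/(1-\gamma) \in [\gamma/(1-\gamma),\,1/(1-\gamma)]$ using $\reward\in[0,1]$, hence $\|\vvalue^{(1)} - \vvalue^{(0)}\|_\infty \le 1$. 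Combining, $\gamma\|\vvalue^{(U-1)} - \vvalue^{(U)}\|_\infty \le \gamma^U \le 2\gamma^U$, which proves the lemma.

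I expect the conceptual heart — and the only nonroutine part — to be the choice of $\btheta_t$ as the maximizer of the \emph{last} backup (so that the sole discrepancy is $\vvalue^{(U-1)}$ versus $\vvalue^{(U)}$) together with the observation that membership in $\cB$ is exactly what turns the inner product $\la\btheta_t, \bphi_W\ra$ into an expectation controllable by $\|W\|_\infty$. The $\gamma$-contraction and the base-case bound $\|\vvalue^{(1)} - \vvalue^{(0)}\|_\infty \le 1$ are standard but must be carried out with the $\max$ over $\cC_k\cap\cB$ retained throughout; the stated constant $2$ is mere slack, as the argument in fact delivers $\gamma^U$.
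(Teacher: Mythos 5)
Your proposal is correct and follows essentially the same route as the paper's proof: choose $\btheta_t$ as the maximizer in the final backup of $\valueite$, use membership in $\cB$ to turn $\la \btheta_t, \bphi_{\vvalue^{(U-1)}-\vvalue^{(U)}}(s_t,a_t)\ra$ into an expectation bounded by the sup-norm gap, and control that gap by the $\gamma$-contraction of the extended Bellman operator iterated down to $\|\vvalue^{(1)}-\vvalue^{(0)}\|_\infty\le 1$. Your observation that the constant $2$ is slack is also consistent with the paper, which bounds the base case by $2$ rather than $1$.
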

 Lemma~\ref{lemma:V-diff} 
 suggests that for any $\epsilon>0$, $\valueite$ in Algorithm \ref{algorithm:2} only needs to perform $\log(1/\epsilon)$ iterations to achieve an $\epsilon$-suboptimal action-value function.

\begin{lemma}\label{lemma:boundk}
We have $K(T) \leq 2d\log[(\lambda +dT)/(\lambda(1-\gamma)^2)]$. 
\end{lemma}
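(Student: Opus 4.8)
The plan is to exploit the stopping criterion $\text{det}(\bSigma_t) > 2\,\text{det}(\bSigma_{t_k})$ in Line~\ref{algorithm:line4}, which forces the determinant of the Gram matrix to at least double over every completed epoch. Since the determinant starts at $\text{det}(\bSigma_1) = \lambda^d$ and cannot grow too fast, the number of doublings, hence the number of epochs, is logarithmically controlled. Concretely, I would sandwich $\text{det}(\bSigma_{T+1})$ between a lower bound produced by the doublings and an upper bound produced by the uniform boundedness of the feature vectors $\bphi_{\vvalue_k}$.

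For the lower bound, observe that for each epoch $k$ that terminates because the criterion is met (that is, every epoch except possibly the final, truncated one) we have $\text{det}(\bSigma_{t_{k+1}}) \geq 2\,\text{det}(\bSigma_{t_k})$. Chaining this across all completed epochs and using $\text{det}(\bSigma_{t_0}) = \text{det}(\bSigma_1) = \lambda^d$ together with the monotonicity of the determinant (each update adds a rank-one positive semidefinite matrix, so $\text{det}(\bSigma_{t_{K(T)-1}}) \leq \text{det}(\bSigma_{T+1})$) yields $2^{K(T)-1}\lambda^d \leq \text{det}(\bSigma_{T+1})$.

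For the upper bound, the key structural point is that $\vvalue_k(\cdot) \in [0, 1/(1-\gamma)]$ holds \emph{deterministically}, without invoking the high-probability event of Lemma~\ref{lemma:theta-ball}. Indeed, in $\valueite$ every $\btheta \in \cC_k \cap \cB$ induces a genuine probability distribution $\la \bphi(\cdot|s,a), \btheta\ra$, so the Bellman backup \eqref{eq:onestepv} maps $[0,1/(1-\gamma)]$-valued functions into $[0,1/(1-\gamma)]$-valued functions (using $\reward \leq 1$), while the initialization $\qvalue^{(0)} = 1/(1-\gamma)$ already lies in this range; the bound also holds trivially when $\cC_k \cap \cB = \emptyset$. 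Consequently Definition~\ref{assumption-linear} gives $\|\bphi_{\vvalue_k}(s,a)\|_2 \leq \sqrt{d}/(1-\gamma)$ for all $(s,a)$. A trace computation then gives $\text{tr}(\bSigma_{T+1}) = \lambda d + \sum_{t=1}^{T}\|\bphi_{\vvalue_{k(t)}}(s_t,a_t)\|_2^2 \leq \lambda d + Td/(1-\gamma)^2$, and the AM-GM inequality $\text{det}(\bSigma_{T+1}) \leq (\text{tr}(\bSigma_{T+1})/d)^d$ yields $\text{det}(\bSigma_{T+1}) \leq (\lambda + T/(1-\gamma)^2)^d$.

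Combining the two bounds gives $2^{K(T)-1}\lambda^d \leq (\lambda + T/(1-\gamma)^2)^d$; taking logarithms produces $K(T) - 1 \leq (d/\ln 2)\log\big((\lambda + T/(1-\gamma)^2)/\lambda\big)$. I would finish by absorbing the factor $1/\ln 2 < 2$ and the additive constant into the claimed coefficient $2d$, and by loosening the argument of the logarithm via $(1-\gamma)^2 \leq 1 \leq d$, namely $(\lambda + T/(1-\gamma)^2)/\lambda = (\lambda(1-\gamma)^2 + T)/(\lambda(1-\gamma)^2) \leq (\lambda + dT)/(\lambda(1-\gamma)^2)$, which recovers $K(T) \leq 2d\log[(\lambda + dT)/(\lambda(1-\gamma)^2)]$. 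This is essentially the standard elliptical-potential / rarely-switching determinant argument, so I expect no deep obstacle; the one point genuinely requiring care is that this lemma carries no probability qualifier, so the uniform feature bound must be read off from the structure of $\valueite$ and the constraint set $\cB$ rather than borrowed from the conditional Lemma~\ref{lemma:UCB}.
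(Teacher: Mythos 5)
Your proof is correct and follows essentially the same determinant-doubling argument as the paper: chain $\det(\bSigma_{t_{k+1}}) \geq 2\det(\bSigma_{t_k})$ across the completed epochs, lower-bound $\det(\bSigma_{T+1})$ by $2^{K(T)-1}\lambda^d$, and upper-bound it via the uniform bound $\|\bphi_{\vvalue_k}(s,a)\|_2 \le \sqrt{d}/(1-\gamma)$. The only differences are minor: you control the determinant through the trace and AM--GM where the paper uses the operator norm (yours is marginally tighter before both are loosened to the stated form), and your observation that $\vvalue_k \le 1/(1-\gamma)$ holds deterministically from the structure of $\valueite$ and the set $\cB$ is a small but genuine improvement in rigor, since the paper invokes the conditional Lemma~\ref{lemma:UCB} for a bound that requires no probabilistic event.
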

Lemma \ref{lemma:boundk} suggests that Algorithm \ref{algorithm} only needs to update its policy for $K(T) = \tilde O(d)$ times, which is almost independent of the time horizon $T$. In sharp contrast, RL algorithms with feature mapping in the finite-horizon setting need to update their policy every $H$ steps \citep{jin2019provably, modi2019sample}, which leads to $O(T/H)$ number of updates.
\begin{proof}[Proof sketch of Theorem \ref{thm:regret}]
The regret can be decomposed as follows:
 \begin{align}
    \text{Regret}(T) &= \sum_{k=0}^{K(T)-1}\sum_{t=t_k}^{t_{k+1}-1} \big[\vvalue^*(s_t) - \vvalue^\pi_t(s_t)\big] \leq\sum_{k=0}^{K(T)-1}\underbrace{\sum_{t=t_k}^{t_{k+1}-1} \big[\vvalue_k(s_t)-\vvalue^\pi_t(s_t)\big]}_{E_k},\label{eq:0}
  \end{align}
where the inequality holds  
due to Lemma \ref{lemma:UCB}. $E_k$ can be further bounded as follows by Bellman equation and Lemma~\ref{lemma:V-diff}. 
\begin{align}
    E_k
    &\leq 2/(1-\gamma)^2 + 2\gamma^U(t_{k+1} - t_k)/(1-\gamma) +  \sum_{t = t_k}^{t_{k+1}-1}\la \btheta_t- \btheta^*, \bphi_{\vvalue_k}(s_t, a_t)\ra/(1-\gamma) + \Xi_t,\label{eq:sketch_1}
\end{align}
where $\Xi_t = \big[ \big[\PP(\vvalue_k-\vvalue^\pi_{t+1})\big](s_t,a_t)-\big(\vvalue_k(s_{t+1})-\vvalue^\pi_{t+1}(s_{t+1})\big)\big]/(1-\gamma)$. 
Taking summation of \eqref{eq:sketch_1} from $k=0$ to $K(T)-1$ and rearranging it, we obtain that $\sum_{k=0}^{K(T)-1}E_k$ is upper bounded as follows
\begin{align}
        \sum_{k=0}^{K(T)-1} E_k
    &\leq \frac{2K(T}{(1-\gamma)^2} + \frac{2\gamma^UT}{1-\gamma}+ \sum_{k=0}^{K(T)-1}\sum_{t = t_k}^{t_{k+1}-1}\Xi_t  + \sum_{k=0}^{K(T)-1} \sum_{t = t_k}^{t_{k+1}-1}\frac{\la \btheta_t- \btheta^*, \bphi_{\vvalue_k}(s_t, a_t)\ra}{1-\gamma} ,\notag
\end{align}
where the first term on the R.H.S. can be further bounded by $\tilde O(d/(1-\gamma)^2)$ by Lemma \ref{lemma:boundk}, the second term can be  bounded by 1 with the choice of $U$, the third term can be bounded by $\tilde O(d\sqrt{T}/(1-\gamma)^2)$ by Lemma \ref{lemma:theta-ball}, and the last term can be bounded by $\tilde O(\sqrt{T}/(1-\gamma)^2)$ by Azuma-Hoeffding inequality. 
\end{proof}

\subsection{Proof Sketch of Theorem \ref{thm:1}}\label{subsec:lower bound}


    



\begin{figure*}[!h]
    \centering
    \begin{tikzpicture}[node distance=2cm,>=stealth',bend angle=45,auto]

  \tikzstyle{place}=[circle,thick,draw=blue!75,fill=blue!20,minimum size=6mm]
\tikzset{every loop/.style={min distance=15mm, font=\footnotesize}}
  \begin{scope}[xshift=-3.5cm]
    \node [place, label=left:{...}] (c1)                                    {$\state_1$};

    \coordinate[left of=c1] (e1) {}; 
    \node [place, label=right:{...}, label=left:{...}] (e2) [left of=e1] {$\state_0$}; 
    \path[->] (e2)
    edge [in=170,out=110, min distance = 6cm, loop] node[above] {$1- \delta - \la \ab_1, \btheta\ra$} ()
    edge [in=120,out=60] node[below]{$\delta + \la \ab_1, \btheta\ra$} (c1)
    
        edge [in=-170,out=-110, min distance = 6cm, loop] node[below] {$1- \delta - \la \ab_i, \btheta\ra$} ()
    edge [in=-120,out=-60] node[above]{$\delta + \la \ab_i, \btheta\ra$} (c1)
    ;

  \end{scope}
    \begin{scope}[xshift=2cm]
    \node [place] (c1)                                    {$\state_1$};

    \coordinate[left of=c1] (e1) {}; 
    \node [place] (e2) [left of=e1] {$\state_0$}; 
    \path[->] (c1)
    edge [in=-60,out=60, min distance = 6cm, loop] node[right] {$1- \delta$} ()
    edge [in=0,out=180] node[auto]{$\delta$} (e2)
    ;

  \end{scope}

\end{tikzpicture}
\vspace*{-8mm}
    \caption{Class of hard-to-learn linear kernel MDPs considered in Section \ref{subsec:lower bound}. The left figure demonstrates the state transition probability starting from $\state_0$ with different action $\ab_i$. The right figure demonstrates the state transition probability starting from $\state_1$ with any action.}
    \label{fig:hardmdp}
\end{figure*}
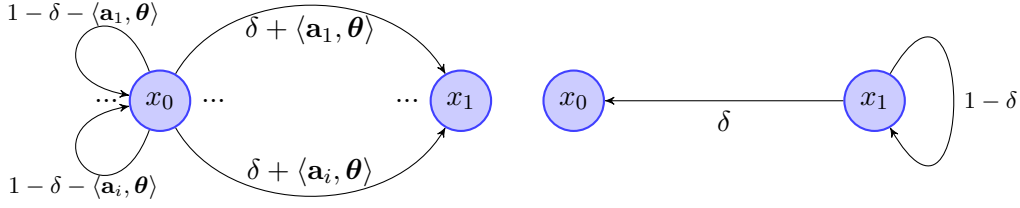

At the core of the proof of Theorem \ref{thm:1} is to construct a class of hard-to-learn MDP instances. We show the construction of these instances here and defer the detailed proof to Appendix \ref{app:main}.
Let $M(\cS, \cA, \gamma, \reward, \PP_{\btheta})$ denote these hard MDPs. 
The state space $\cS$ consists of two states $\state_0, \state_1$. The action space $\cA$ consists of $2^{d-1}$ vectors $\ab \in \{-1, 1\}^{d-1}$. The reward function $\reward$ satisfies that $\reward(\state_0, \ab) = 0$ and $\reward(\state_1, \ab) = 1$ for any $\ab \in \cA$. The probability transition function $\PP_{\btheta}$ is parameterized by a $(d-1)$-dimensional vector $\btheta \in \bTheta$, $\bTheta = \{ -\Delta/(d-1), \Delta/(d-1)\}^{d-1}$, which is defined as $\PP_{\btheta}(\state_0|\state_0, \ab) = 1-\delta - \la \ab, \btheta\ra$, $\PP_{\btheta}(\state_1|\state_0, \ab) = \delta + \la \ab, \btheta\ra$, $\PP_{\btheta}(\state_0|\state_1, \ab) = \delta$, $\PP_{\btheta}(\state_1|\state_1, \ab) = 1-\delta$, 
where $\delta$ and $\Delta$ are positive parameters that need to be determined in later proof. 
It can be verified that $M$ is indeed a linear kernel MDP with the vector $\tilde\btheta = ( \btheta^\top, 1)^\top \in \RR^d$ while $\Delta \leq d-1$ and the feature mapping $\bphi(s'|s,a)$ defined as follows:
\begin{align}
    &\bphi(\state_0|\state_0,\ab) = \begin{pmatrix}
    -\ab \\
    1-\delta
    \end{pmatrix}, \bphi(\state_1|\state_0,\ab)=  \begin{pmatrix}
    \ab \\
    \delta
    \end{pmatrix}, \bphi(\state_0|\state_1,\ab)=  \begin{pmatrix}
    \zero \\
    \delta
    \end{pmatrix}, 
    \bphi(\state_1|\state_1,\ab)=  \begin{pmatrix}
    \zero\\
    1-\delta
    \end{pmatrix}.\notag
\end{align}
\begin{remark}
The class of hard-to-learn linear kernel MDPs can be regarded as an extension of the hard instance in linear bandits literature \citep{dani2008stochastic, lattimore2018bandit} to MDPs. 
Our constructed MDPs are similar to those in \citet{jaksch2010near, osband2016lower} for the average-reward MDPs and \citet{lattimore2012pac} for the discounted MDPs. By Example \ref{exp: tabular}, we know that tabular MDPs can be regarded as specialized linear kernel MDPs with a $|\cS|^2|\cA|$-dimensional feature mapping. However, simply applying the MDPs in \citet{jaksch2010near, osband2016lower, lattimore2012pac} to our setting would yield a $\Omega(\sqrt{|\cS||\cA|T}/(1-\gamma)^{1.5})$ lower bound for regret, which is looser than our result because $\sqrt{|\cS||\cA|} \leq |\cS|^2|\cA| = d$. 
\end{remark}
From now on, we set $\delta = 1-\gamma,\ \Delta = d\sqrt{1-\gamma}/(90\sqrt{2T})$ and only consider the case where $\pi$ is a deterministic policy, since the regret result of the case where $\pi$ is stochastic is lower bounded by that of the deterministic one. Let $N_0$ denote the total visit number to state $\state_0$. Similiarily, let $N_1$ denote the total visit number to state $\state_1$, $N_0^{\ab}$ denote the total visit number to state $\state_0$ followed by action $\ab$ and $N_0^{\tilde \cA}$ denote the total visit number to state $\state_0$ followed by actions in subset $\tilde \cA \subseteq \cA$. Let $\cP_{\btheta}(\cdot)$ denote the distribution over $\cS^T$, where $s_1 = \state_0$, $s_{t+1} \sim \PP_{\btheta}(\cdot|s_t, a_t)$, $a_t$ is decided by $\pi_t$. Let $\EE_{\btheta}$ denote the expectation w.r.t. distribution $\cP_{\btheta}$. Suppose we have an MDP $M(\cS, \cA, \gamma, \reward, \PP_{\btheta})$. During this proof the starting state $s_1$ is set to be $\state_0$. 
For simplicity, let $\text{Regret}(\btheta)$ denote $\text{Regret}(\pi, M(\cS, \cA, \gamma, \reward, \PP_{\btheta}), T)$ without confusion. We need the following lemmas. The first lemma shows that to bound $\text{Regret}(\btheta)$, we only need to bound the summation of rewards over $s_t, a_t$.
\begin{lemma}\label{lemma:regrettrans}
The regret $\text{Regret}(\btheta)$ satisfies that
\begin{align}
    &\EE_{\btheta}\text{Regret}(\btheta) \geq \EE_{\btheta}\bigg[\sum_{t=1}^T \Big[\vvalue^*(s_t) - \frac{1}{1-\gamma}\reward(s_t, a_t)\Big]-\frac{\gamma}{(1-\gamma)^2}\bigg].\notag
\end{align}
\end{lemma}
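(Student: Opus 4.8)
The plan is to observe that the $\vvalue^*(s_t)$ contributions are identical on both sides of the claimed inequality, so by the definition $\text{Regret}(\btheta) = \sum_{t=1}^T[\vvalue^*(s_t) - \vvalue^\pi_t(s_t)]$ it suffices to prove the single upper bound
\[
\EE_{\btheta}\Big[\sum_{t=1}^T \vvalue^\pi_t(s_t)\Big] \leq \frac{1}{1-\gamma}\,\EE_{\btheta}\Big[\sum_{t=1}^T \reward(s_t,a_t)\Big] + \frac{\gamma}{(1-\gamma)^2}.
\]
First I would unfold each value function into its defining discounted-reward series. By the tower property of conditional expectation, taking $\EE_{\btheta}$ of the conditional expectation $\vvalue^\pi_t(s_t)=\EE[\sum_{i\ge0}\gamma^i\reward(s_{t+i},a_{t+i})\mid s_1,\dots,s_t]$ integrates out the inner conditioning, giving $\EE_{\btheta}[\vvalue^\pi_t(s_t)] = \EE_{\btheta}[\sum_{i=0}^\infty \gamma^i\reward(s_{t+i},a_{t+i})]$. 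Summing over $t$ and pulling the expectation outside yields $\EE_{\btheta}[\sum_{t=1}^T \vvalue^\pi_t(s_t)] = \EE_{\btheta}[\sum_{t=1}^T\sum_{i=0}^\infty \gamma^i\reward(s_{t+i},a_{t+i})]$.

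The key step is to re-index the double sum by absolute time $\tau = t+i$, so that each reward $\reward(s_\tau,a_\tau)$ is collected with total weight $c_\tau = \sum_{t=1}^{\min\{\tau,T\}}\gamma^{\tau-t}$. A direct geometric-sum computation shows $c_\tau = (1-\gamma^\tau)/(1-\gamma)\le 1/(1-\gamma)$ when $\tau\le T$, and $c_\tau = \gamma^{\tau-T}(1-\gamma^T)/(1-\gamma)\le \gamma^{\tau-T}/(1-\gamma)$ when $\tau>T$. I would then split $\sum_\tau c_\tau\reward(s_\tau,a_\tau)$ at $\tau=T$: for $\tau\le T$ use $c_\tau\le 1/(1-\gamma)$ together with $\reward\ge0$ to produce the leading $\frac{1}{1-\gamma}\EE_{\btheta}[\sum_{\tau=1}^T\reward(s_\tau,a_\tau)]$ term, and for the tail $\tau>T$ use the reward bound $\reward\le1$ together with $\sum_{\tau>T}c_\tau \le \frac{1}{1-\gamma}\sum_{j\ge1}\gamma^j = \frac{\gamma}{(1-\gamma)^2}$ to produce the additive error, which matches the stated constant exactly.

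The only real obstacle is justifying the interchange of the infinite sum over $i$, the expectation, and the reindexing. Since every reward lies in $[0,1]$ and all weights are nonnegative, this is immediate from Tonelli's theorem (equivalently, absolute convergence of the rearranged nonnegative series), so no delicate convergence argument is needed. With the interchange in hand, the remainder is the elementary geometric bookkeeping above, and the stated inequality follows by subtracting $\EE_{\btheta}[\sum_{t=1}^T\vvalue^\pi_t(s_t)]$ from $\EE_{\btheta}[\sum_{t=1}^T\vvalue^*(s_t)]$.
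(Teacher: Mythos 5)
Your proposal is correct and follows essentially the same route as the paper: unfold each $\vvalue^\pi_t(s_t)$ into its discounted-reward series, re-index the double sum by absolute time so each reward $\reward(s_\tau,a_\tau)$ carries weight $c_\tau=\sum_{t'=\max\{0,\tau-T\}}^{\tau-1}\gamma^{t'}$, bound that weight by $1/(1-\gamma)$ for $\tau\le T$ and by $\gamma^{\tau-T}/(1-\gamma)$ for $\tau>T$, and use $\reward\in[0,1]$ to collect the $\gamma/(1-\gamma)^2$ tail. Your explicit appeal to Tonelli for the interchange is a minor addition the paper leaves implicit; otherwise the two arguments coincide.
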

Next lemma gives the relation between $\EE_{\btheta}N_1$, $\EE_{\btheta}N_0^{\ab}$ and $\EE_{\btheta}N_0$, which is useful to our proof. 
\begin{lemma}\label{lemma:n1}
Suppose $2\Delta<\delta$ and $(1-\delta)/\delta < T/5$, then for $\EE_{\btheta}N_1$ and $\EE_{\btheta}N_0$, we have
\begin{align}
    &\EE_{\btheta}N_1 \leq \frac{T}{2} +\frac{1}{2\delta}\sum_{\ab}\la \ab, \btheta\ra \EE_{\btheta}N_0^{\ab},\ \text{and} \quad\EE_{\btheta}N_0 \leq 4T/5.\notag
\end{align}
\end{lemma}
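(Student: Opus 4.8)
The plan is to derive a single occupation-measure (flow-balance) identity for the two-state chain and then read off both inequalities from it. Since the run starts at $\state_0$ and the chain has only the states $\state_0,\state_1$, every visit to $\state_1$ among $s_1,\dots,s_T$ is produced by a transition into $\state_1$, so the natural move is to equate the expected inflow and outflow of $\state_1$. By the Markov property, conditioning on $(s_t,a_t)$ (the deterministic policy fixes $a_t$ given $s_t$ and the history),
\begin{align}
\PP_{\btheta}(s_{t+1}=\state_1\mid s_t,a_t)=\ind(s_t=\state_0)\big(\delta+\la a_t,\btheta\ra\big)+\ind(s_t=\state_1)(1-\delta).\notag
\end{align}
Taking expectations, summing over $t=1,\dots,T$, reindexing the resulting left-hand sum and using $s_1=\state_0$ (so the only surviving boundary term is $\PP_{\btheta}(s_{T+1}=\state_1)$), together with $\sum_{\ab}\EE_{\btheta}N_0^{\ab}=\EE_{\btheta}N_0$ and $\sum_{t}\EE_{\btheta}[\ind(s_t=\state_0)\la a_t,\btheta\ra]=\sum_{\ab}\la\ab,\btheta\ra\,\EE_{\btheta}N_0^{\ab}$, I obtain the identity
\begin{align}
\delta\,\EE_{\btheta}N_1=\delta\,\EE_{\btheta}N_0+\sum_{\ab}\la \ab,\btheta\ra\,\EE_{\btheta}N_0^{\ab}-\PP_{\btheta}(s_{T+1}=\state_1).\label{plan:balance}
\end{align}

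For the first inequality I would substitute $\EE_{\btheta}N_0=T-\EE_{\btheta}N_1$ into \eqref{plan:balance} and simply drop the nonnegative boundary term, giving $2\delta\,\EE_{\btheta}N_1\le \delta T+\sum_{\ab}\la\ab,\btheta\ra\,\EE_{\btheta}N_0^{\ab}$, and hence the claimed bound after dividing by $2\delta$. Note this half needs neither hypothesis.

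For the second inequality I would instead lower-bound $\EE_{\btheta}N_1$ from \eqref{plan:balance}. Since each coordinate of $\btheta$ has magnitude $\Delta/(d-1)$ and $\ab\in\{-1,1\}^{d-1}$, we have $|\la\ab,\btheta\ra|\le\Delta$, whence $\sum_{\ab}\la\ab,\btheta\ra\,\EE_{\btheta}N_0^{\ab}\ge-\Delta\,\EE_{\btheta}N_0$; combined with $2\Delta<\delta$ (so that $\delta-\Delta>\delta/2$) this yields $\delta\,\EE_{\btheta}N_1\ge(\delta-\Delta)\EE_{\btheta}N_0-\PP_{\btheta}(s_{T+1}=\state_1)>\tfrac12\delta\,\EE_{\btheta}N_0-\PP_{\btheta}(s_{T+1}=\state_1)$. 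Substituting $\EE_{\btheta}N_0=T-\EE_{\btheta}N_1$ and solving the resulting linear inequality gives an upper bound of the form $\EE_{\btheta}N_0\le\tfrac23 T+O(1/\delta)$; the hypothesis $(1-\delta)/\delta<T/5$ is exactly what absorbs the $O(1/\delta)$ correction into the horizon and tightens the constant to $4T/5$.

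The main obstacle is the bookkeeping of the boundary (transient) contribution from truncating at the horizon $T$: the final excursion into $\state_1$ is cut off, so the clean occupation identity is perturbed by a term of the order of the expected holding time $(1-\delta)/\delta$ in $\state_1$. Quantifying this carefully, rather than using only the crude bound $\PP_{\btheta}(s_{T+1}=\state_1)\le 1$, and verifying that it still fits under $(1-\delta)/\delta<T/5$ is where the constants must be tracked precisely; everything else reduces to elementary algebra on \eqref{plan:balance}.
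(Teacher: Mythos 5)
Your proposal is correct and takes essentially the same route as the paper: your flow-balance identity is precisely the paper's intermediate identity (the paper's boundary term $\psi_{\btheta}$ satisfies $\delta\psi_{\btheta}=\cP_{\btheta}(s_{T+1}=\state_1)$), and both inequalities are then extracted exactly as you describe, via $N_0+N_1=T$ and $|\la\ab,\btheta\ra|\le\Delta$. The one step you defer --- sharpening the crude bound $\cP_{\btheta}(s_{T+1}=\state_1)\le 1$ --- is indeed necessary for the constant $4T/5$, and is settled by $\cP_{\btheta}(s_{T+1}=\state_1)\le 1-\delta$ (both incoming transition probabilities to $\state_1$ are at most $1-\delta$ in this parameter regime), which is what the paper implicitly does by keeping the terminal probabilities of $\state_0$ and $\state_1$ separate.
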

Next lemma gives the bound for KL divergence. 
\begin{lemma}\label{lemma:kl}
Suppose that $\btheta$ and $\btheta'$ only differs from $j$-th coordinate, $2\Delta<\delta \leq 1/3 $. Then we have the following bound for the KL divergence between $\cP_{\btheta}$ and $\cP_{\btheta'}$:
\begin{align}
    \text{KL}(\cP_{\btheta'}\| \cP_{\btheta}) \leq \frac{16\Delta^2}{(d-1)^2\delta}\EE_{\btheta}N_0.\notag
\end{align}
\end{lemma}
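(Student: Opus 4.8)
The plan is to exploit the fact that $\cP_{\btheta}$ and $\cP_{\btheta'}$ are the laws of trajectories generated by the \emph{same} deterministic policy $\pi$, so that the only discrepancy between the two measures sits in the state-transition kernels. Since $a_t$ is a deterministic function of the observed history, the trajectory likelihood factorizes into a product of one-step transition probabilities, and the chain rule for the KL divergence gives
\begin{align}
\text{KL}(\cP_{\btheta'}\| \cP_{\btheta}) = \sum_{t=1}^{T-1}\EE\Big[\text{KL}\big(\PP_{\btheta'}(\cdot|s_t,a_t)\,\big\|\,\PP_{\btheta}(\cdot|s_t,a_t)\big)\Big],\notag
\end{align}
where the outer expectation is over the history under the relevant path measure. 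The key structural observation is that the transition out of $\state_1$ does not depend on $\btheta$ at all (it is $\delta,1-\delta$ under both), so every summand with $s_t=\state_1$ vanishes and only the visits to $\state_0$ survive.

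Next I would bound each surviving summand. At $s_t=\state_0$ under action $\ab$, both kernels are Bernoulli on $\{\state_0,\state_1\}$ with success probabilities $p=\delta+\la\ab,\btheta\ra$ and $p'=\delta+\la\ab,\btheta'\ra$. Because $\btheta$ and $\btheta'$ agree off the $j$-th coordinate, each coordinate has magnitude $\Delta/(d-1)$, and $a_j\in\{-1,1\}$, the gap is exactly $|p-p'|=2\Delta/(d-1)$. I would then invoke the elementary inequality $\text{KL}(\mathrm{Ber}(q)\|\mathrm{Ber}(p))\le (q-p)^2/(p(1-p))$, which follows from $\log x\le x-1$. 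It remains to control the denominator: since $|\la\ab,\btheta\ra|\le\Delta$ and the hypotheses $2\Delta<\delta\le 1/3$ hold, $p$ is sandwiched as $\delta/2\le p\le 3\delta/2\le 1/2$, whence $p(1-p)\ge\delta/4$. Combining the two estimates yields the per-step bound $\frac{(2\Delta/(d-1))^2}{\delta/4}=\frac{16\Delta^2}{(d-1)^2\delta}$, uniformly over the action $\ab$.

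Finally I would sum over $t$: since the per-step divergence is zero at $\state_1$ and at most $16\Delta^2/((d-1)^2\delta)$ at $\state_0$, the total is bounded by that constant times the expected number of visits to $\state_0$, which is $\frac{16\Delta^2}{(d-1)^2\delta}\EE_{\btheta}N_0$, as claimed. The main point requiring care is the bookkeeping of the decomposition, namely ensuring the expectation lands on $N_0$ under the measure asserted in the statement; here the Bernoulli bound is symmetric in $p$ and $p'$, so the per-step estimate is identical in either direction of the KL, and one simply selects the decomposition that produces $\EE_{\btheta}N_0$. The only genuinely quantitative ingredient is the two-sided control of $p(1-p)$, and the conditions $2\Delta<\delta\le 1/3$ are exactly what keep that denominator bounded away from zero.
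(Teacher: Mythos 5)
Your proposal follows essentially the same route as the paper: chain rule for the KL divergence over the trajectory, observing that the summands vanish whenever $s_t=\state_1$ because the transition out of $\state_1$ is $\btheta$-independent, bounding the per-step Bernoulli KL at $\state_0$ by a quadratic over a lower bound on the success probability, and summing to collect the expected visit count to $\state_0$. The only substantive difference is the elementary inequality you invoke: you use $\text{KL}(\mathrm{Ber}(q)\|\mathrm{Ber}(p))\le (q-p)^2/(p(1-p))$ via $\log x\le x-1$, whereas the paper uses Lemma 20 of Jaksch et al., which gives $2(\epsilon')^2/\delta'$ with $\delta'$ one of the success probabilities; both yield the same constant $16\Delta^2/((d-1)^2\delta)$ after the sandwich $\delta/2\le p\le 3\delta/2\le 1/2$, so this is a cosmetic substitution rather than a different argument.

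One step of your write-up is not actually justified. The chain rule for $\text{KL}(\cP_{\btheta'}\|\cP_{\btheta})$ necessarily places the outer expectation under the \emph{first} measure, $\cP_{\btheta'}$; there is no freedom to ``select the decomposition that produces $\EE_{\btheta}N_0$,'' and the symmetry of the per-step Bernoulli bound in $p$ and $p'$ does not move the expectation to the other measure. Carried out correctly, your argument (and the paper's own proof, which indeed ends with $\EE_{\btheta'}N_0$) yields $\frac{16\Delta^2}{(d-1)^2\delta}\EE_{\btheta'}N_0$, which does not literally match the $\EE_{\btheta}N_0$ in the statement. This discrepancy is present in the paper itself and is harmless downstream, since Lemma \ref{lemma:n1} bounds both $\EE_{\btheta}N_0$ and $\EE_{\btheta'}N_0$ by $4T/5$, but you should either state the lemma with $\EE_{\btheta'}N_0$ or supply an actual argument relating the two expectations rather than appealing to symmetry of the per-step bound.
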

\begin{proof}[Proof Sketch of Theorem \ref{thm:1}]
By Lemma \ref{lemma:regrettrans}, we only need to lower bound the difference between $\vvalue^*$ and $\reward(s_t, a_t)$. We can calculate $\vvalue^*$ through the definition of our MDP as
\begin{align}
    &\vvalue^*(\state_0) = \frac{\gamma(\Delta + \delta)}{(1-\gamma)(\gamma(2\delta+\Delta - 1) +1)},\ \vvalue^*(\state_1) = \frac{\gamma(\Delta + \delta) + 1-\gamma}{(1-\gamma)(\gamma(2\delta+\Delta - 1) +1)}.\notag
\end{align}
Since $\reward(\state_0, \ab) = 0$ and $\reward(\state_1, \ab) = 1$, then the lower bound can be fully characterized by $\EE_{\btheta}N_1$. Furthermore, we can derive that
\begin{align}
    \frac{1}{|\bTheta|}\sum_{\btheta}\EE_{\btheta}N_1 &\leq \frac{T}{2}+\frac{1}{4\delta}\frac{\Delta}{(d-1)|\bTheta|}\sum_{j=1}^{d-1}\sum_{\btheta} \Big[\EE_{\btheta'}N_0+ \frac{cT}{8}\sqrt{\text{KL}( \cP_{\btheta'}\| \cP_{\btheta})}\Big],\label{eq:ttt}
\end{align}
where $\btheta'$ only differs from $\btheta$ at $j$-th coordinate. By Lemma \ref{lemma:n1} and \ref{lemma:kl} we can obtain an upper bound of \eqref{eq:ttt} in terms of $\delta$ and $\Delta$. Selecting $\delta = 1-\gamma,\ \Delta = d\sqrt{1-\gamma}/(90\sqrt{2T})$ gives the final result. 
\end{proof}

\section{Conclusion}\label{section 8}
We proposed a novel algorithm for solving linear kernel MDPs called $\algname$. We prove that the regret of $\algname$ can be upper bounded by $\tilde O(d\sqrt{T}/(1-\gamma)^2)$, which is the first result of its kind for learning discounted MDPs without accessing the generative model or making strong assumptions like uniform ergodicity. We also proved a lower bound $\Omega(d\sqrt{T}/(1-\gamma)^{1.5})$ which holds for any algorithm. There still exists a gap of $(1-\gamma)^{-0.5}$ between the upper and lower bounds, and we leave it as an open problem for future work.

\section*{Acknowledgement}
We thank Lin Yang for pointing out the related work \citep{jia2020model, ayoub2020model}, and Csaba Szepesv\'ari for helpful comments.

\appendix 
\section{Conversion from Sample Complexity to Regret}\label{app:conversion}
Suppose that an algorithm has $\tilde O(C\epsilon^{-a})$ sample complexity of exploration, where $C$ is a constant that may depend on the problem-dependent parameters such as $|\cS|,|\cA|,\gamma,d$. Then with probability at least $1-\delta$, it has at most $\tilde O(C \epsilon^{-a})$ number of rounds $t$ such that $\vvalue^*(s_t) - \vvalue^\pi_t(s_t) \geq \epsilon$. We denote the collection of these rounds by set $\cD$. Then for the first $T$ rounds, with probability at least $1-\delta$, its regret can be bounded as
\begin{align}
    \text{Regret}(T) &=  \sum_{t=1}^T \big[\vvalue^*(s_t) - \vvalue^\pi_t(s_t)\big]\notag \\
    &= \sum_{t \in [T] \cap \cD}\big[\vvalue^*(s_t) - \vvalue^\pi_t(s_t)\big] + \sum_{t \in [T] \setminus \cD}\big[\vvalue^*(s_t) - \vvalue^\pi_t(s_t)\big]\notag \\
    & \leq |\cD|\cdot 1/(1-\gamma) + T \cdot \epsilon\notag \\
    & = \tilde O(C \epsilon^{-a}/(1-\gamma) + \epsilon T),\notag
\end{align}
where the first inequality uses the fact that $0\leq V^*(s_t), V_t^\pi(s_t) \leq 1/(1-\gamma)$, and the last line holds due to the definition of $\cD$. 
Select $\epsilon = T^{-1/(a+1)}(1-\gamma)^{1/(a+1)}C^{-1/(a+1)}$ to minimize the above regret bound, we have $\text{Regret}(T) = \tilde O(C^{1/(a+1)}(1-\gamma)^{-1/(a+1)} T^{a/(a+1)})$. For example, if the sample complexity of exploration is $\tilde O(C\epsilon^{-2})$, then it implies an $\tilde O(C^{1/3} (1-\gamma)^{-1/3}T^{2/3})$ regret bound.

\section{Details of Implementation}
In this section, we discuss how to efficiently implement Algorithm \ref{algorithm} and Algorithm \ref{algorithm:2} by using Monte Carlo integration. We consider a special case similar to that of \citet{yang2019reinforcement}, where 
\begin{align}
    [\bphi(s'|s,a)]_j = [\bpsi(s')]_j\cdot [\bmu(s,a)]_j,\ |[\bmu(s,a)]_j| \leq 1,\ \Big|\sum_{s'} [\bpsi(s')]_j\Big| \leq D,\label{eq:imple}
\end{align}
where $D>0$ is a constant. \eqref{eq:imple} suggests the feature mapping $\bphi(s'|s,a)$ is the element-wise product of two feature mappings $\bpsi(s)$ and $\bmu(s,a)$. We consider the case where $|\cA|$ is finite. There are two main implementation issues in Algorithm \ref{algorithm} and Algorithm \ref{algorithm:2}. First, we need to compute the integration $\bphi_{\vvalue}(s,a)$ efficiently. Note that under \eqref{eq:imple}, the $j$-th coordinate of the integration $[\bphi_{\vvalue}(s,a)]_j$ can be decomposed into the production of $\sum_{s'}\vvalue(s')[\bpsi(s')]_j$ and $[\bmu(s,a)]_j$. Therefore, we can use Monte Carlo integration to evaluate $\sum_{s'}\vvalue(s')[\bpsi(s')]_j$ and obtain a \emph{uniform} accurate estimation for all $(s,a)$ \emph{simultaneously}. We have the following proposition which can be proved by using Azuma-Hoeffding inequality:
\begin{proposition}\label{prop:int}
Let $\vvalue$ be some $1/(1-\gamma)$-bounded function. Suppose for any $j \in [d]$, we have the access to the integration constant $I_j = \sum_{s' \in \cS}[\bpsi(s')]_j$. Then we generate $s^{i,j}$ and denote $\hat\bphi_{\vvalue}(s,a) \in \RR^d$ as follows:
\begin{align}
    s^{i, j} \sim \frac{[\bpsi(\cdot)]_j}{I_j},\ i =1,\dots, R,\ [\hat\bphi_{\vvalue}(s, a)]_{j} = I_{j}\cdot[\bmu(s, a)]_{j}\cdot\frac{1}{R}\sum_{i = 1}^R\vvalue(s^{i, j}),\notag
\end{align}
then for any $j$, with probability at least $1-\delta$, for all $(s,a) \in \cS \times \cA$, we have
\begin{align}
    \big|[\bphi_{\vvalue}(s, a)]_{j} - [\hat\bphi_{\vvalue}(s, a)]_{j}\big| \leq \frac{D\log(1/\delta)}{\sqrt{R}(1-\gamma)}.\notag
\end{align}
\end{proposition}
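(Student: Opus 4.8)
The plan is to exploit the product structure in \eqref{eq:imple} so that the only randomness in the estimator lives in a quantity that does \emph{not} depend on $(s,a)$; this is precisely what will let us obtain a bound holding \emph{uniformly} over all $(s,a)$ without a union bound over the (possibly infinite) state-action space. First I would rewrite the target coordinate. Using $[\bphi(s'|s,a)]_j = [\bpsi(s')]_j[\bmu(s,a)]_j$, we have
\begin{align}
    [\bphi_{\vvalue}(s,a)]_j = \sum_{s'}[\bpsi(s')]_j[\bmu(s,a)]_j\vvalue(s') = [\bmu(s,a)]_j\cdot G_j,\quad G_j := \sum_{s'}[\bpsi(s')]_j\vvalue(s'),\notag
\end{align}
so the $(s,a)$-dependence is carried entirely by the scalar multiplier $[\bmu(s,a)]_j$, while $G_j$ is a fixed number depending only on $\bpsi$ and $\vvalue$.

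Next I would verify that the stated sampling scheme yields an unbiased estimator of $G_j$. Writing $G_j = I_j\sum_{s'}\big([\bpsi(s')]_j/I_j\big)\vvalue(s') = I_j\,\EE_{s'\sim[\bpsi(\cdot)]_j/I_j}[\vvalue(s')]$ (with $[\bpsi(\cdot)]_j/I_j$ a valid probability mass function, since it sums to one), each draw gives $\EE[I_j\vvalue(s^{i,j})] = G_j$. Hence $\hat G_j := I_j\cdot\frac{1}{R}\sum_{i=1}^R\vvalue(s^{i,j})$ is unbiased for $G_j$, and by the definition of $\hat\bphi_{\vvalue}$ we have $[\hat\bphi_{\vvalue}(s,a)]_j = [\bmu(s,a)]_j\hat G_j$. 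The crucial consequence is that the error factorizes as
\begin{align}
    \big|[\bphi_{\vvalue}(s,a)]_j - [\hat\bphi_{\vvalue}(s,a)]_j\big| = \big|[\bmu(s,a)]_j\big|\cdot\big|G_j - \hat G_j\big| \leq \big|G_j - \hat G_j\big|,\notag
\end{align}
using $|[\bmu(s,a)]_j|\leq 1$; the right-hand side no longer depends on $(s,a)$.

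It then remains to control $|G_j-\hat G_j|$ by a concentration argument. The summands $I_j\vvalue(s^{i,j})$ are i.i.d.\ with mean $G_j$ and are bounded in absolute value by $|I_j|\cdot\|\vvalue\|_\infty \leq D/(1-\gamma)$, since $|I_j|\leq D$ and $\vvalue$ is $1/(1-\gamma)$-bounded. Applying Hoeffding's inequality (equivalently, Azuma--Hoeffding to the partial sums) to their average yields, with probability at least $1-\delta$, a deviation of order $\frac{D}{(1-\gamma)}\sqrt{\log(1/\delta)/R}$, which is dominated by the claimed $\frac{D\log(1/\delta)}{\sqrt{R}(1-\gamma)}$ after absorbing constants. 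Combining this with the factorized error bound proves the proposition.

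The main point — rather than a genuine obstacle — is to recognize that uniformity over $(s,a)$ comes for free from the factorization, not from any covering or union-bound over the state-action space; the only truly probabilistic step is the one-dimensional concentration bound on the shared quantity $\hat G_j$. The only technical care needed is to ensure $[\bpsi(\cdot)]_j/I_j$ is a bona fide distribution (i.e.\ $[\bpsi(s')]_j$ and $I_j$ share a sign), which I would treat as a standing assumption on the sampling oracle.
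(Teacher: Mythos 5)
Your proposal is correct and follows exactly the route the paper intends (the paper only gives a one-line hint, ``can be proved by using Azuma--Hoeffding inequality,'' together with the observation that $[\bphi_{\vvalue}(s,a)]_j$ factors as $[\bmu(s,a)]_j\cdot\sum_{s'}[\bpsi(s')]_j\vvalue(s')$, so that uniformity over $(s,a)$ is free): you factor out the $(s,a)$-dependent multiplier bounded by $1$, show the Monte Carlo average is an unbiased estimator of the shared scalar $G_j$ with summands bounded by $D/(1-\gamma)$, and conclude by Hoeffding. The only cosmetic discrepancy is that concentration naturally gives a $\sqrt{\log(1/\delta)}$ rather than $\log(1/\delta)$ dependence, which you correctly note is dominated by the stated bound up to constants.
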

Thus, we can approximate $\bphi_{\vvalue}(s, a)$ up to $\epsilon$-accuracy by $\hat\bphi_{\vvalue}(s, a)$ using $\tilde O(1/\epsilon^2)$ points. Second, we consider the efficiency of $\valueite$. At the first glance we may need to store all values of $\qvalue$ over all $(s,a) \in \cS\times\cA$, which leads to a $|\cS| |\cA|$ space complexity. Actually the complexity can be greatly reduced by approximately using Monte Carlo integration as follows. We first randomly sample $URd$ data points $s^{u,i,j}, u \in [U], i \in [R], j \in [d]$ by $s^{\cdot, \cdot, j} \sim [\bpsi(\cdot)]_j/I_j$. At each iteration $u \leq U-1$, we calculate the values $\vvalue^{(u)}(s^{u,i,j})$ based on $\vvalue^{(u-1)}(s^{u-1,i',j'})$ through the following induction rule: 
\begin{align}
    &\vvalue^{(u)}(s^{u,i, j}) = \max_{a \in \cA}\bigg\{\reward(s^{u,i,j}, a) +\gamma \max_{\btheta \in \cC \cap \cB} \bigg\la \btheta, \hat\bphi_{\vvalue^{(u-1)}}(s^{u,i,j}, a)\bigg\ra \bigg\},\label{def:max} \\
    &[\hat\bphi_{\vvalue^{(u-1)}}(s^{u,i,j}, a)]_{j'} = I_{j'}\cdot[\bmu(s^{u,i,j}, a)]_{j'}\cdot\frac{1}{R}\sum_{i' = 1}^R\vvalue^{(u-1)}(s^{u-1, i', j'}).\notag
\end{align}
The maximization problem \eqref{def:max} is reduced to a constrained maximization problem over the convex set $\cC \cap \cB$, which can be solved by projected gradient methods \citep{boyd2004convex} efficiently in practice. Then at $U$-th iteration, we calculate $\qvalue^{(U)}(s,a)$ as
\begin{align}
    &\qvalue^{(U)}(s,a) = \reward(s, a) +\gamma \max_{\btheta \in \cC \cap \cB} \bigg\la \btheta, \hat\bphi_{\vvalue^{(U-1)}}(s, a)\bigg\ra,\notag \\
    &[\hat\bphi_{\vvalue^{(u-1)}}(s, a)]_{j} = I_j\cdot [\bmu(s, a)]_{j}\cdot \frac{1}{R}\sum_{i = 1}^R\vvalue^{(U-1)}(s^{U-1, i, j}).\notag
\end{align}
We can see that to calculate $Q^{(U)}(s,a)$, only $dR$ function values $\vvalue^{(U-1)}(s^{U-1, i, j}),i\in [R], j \in [d]$ need to be stored. Through the same argument of Proposition \ref{prop:int}, $Q^{(U)}(s,a)$ achieves $\epsilon$-accuracy using $R \sim \tilde O(1/\epsilon^2)$ samples. Finally, we analyze the computational complexity of EVI. Suppose we need $B$ time complexity to solve the maximization problem $\max_{\btheta \in \cC \cap \cB}\la \btheta, \ab\ra$ for any $\ab \in \RR^d$. Since we need to solve $d R |\cA|$ number of maximization problem at each iteration of EVI, then we need $UdRB|\cA|$ time complexity to obtain $\vvalue^{(U-1)}(s^{U-1, i, j}),i\in [R], j \in [d]$. After obtaining $\vvalue^{(U-1)}(s^{U-1, i, j})$, we need $dB|\cA|$ time complexity to calculate $\qvalue^{(U)}(s,a)$ for any $(s,a) \in \cS \times \cA$.

\section{Proof of Main Theory}\label{app:main}

In this section we provide the proof of main theory. 

\subsection{Proof of Theorem \ref{thm:regret}}\label{2:main}
In this subsection, we prove Theorem \ref{thm:regret}.  
Besides Lemmas \ref{lemma:theta-ball}-\ref{lemma:boundk} in Section \ref{sec:sketchregret}, we also need the following three additional lemmas.
\begin{lemma}[Azuma–Hoeffding inequality]\label{lemma:azuma}
Let $\{X_k\}_{k=0}^{\infty}$ be a
discrete-parameter real-valued martingale sequence such that for every $k\in \NN$, the condition $|X_k-X_{k-1}|\leq \mu$ holds for some non-negative constant $\mu$. Then with probability at least $1-\delta$, we have 
\begin{align}
    |X_n-X_0|\leq 2\mu\sqrt{n \log 1/\delta}.\notag
\end{align} 
\end{lemma}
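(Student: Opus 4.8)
The statement is the classical Azuma--Hoeffding inequality, so the plan is to run the standard martingale exponential-moment (Chernoff) argument and then convert the resulting tail bound into the claimed high-probability form. First I would introduce the martingale differences $D_k = X_k - X_{k-1}$, which by hypothesis satisfy $|D_k| \le \mu$, and since $\{X_k\}$ is a martingale also satisfy $\EE[D_k \mid \mathcal{F}_{k-1}] = 0$, where $\mathcal{F}_{k-1} = \sigma(X_0,\dots,X_{k-1})$ is the natural filtration. The quantity of interest is then $X_n - X_0 = \sum_{k=1}^n D_k$.

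The core step is to control the conditional moment generating function. For any fixed $s>0$, since $D_k$ is conditionally mean-zero and takes values in the length-$2\mu$ interval $[-\mu,\mu]$, Hoeffding's lemma gives $\EE[e^{sD_k}\mid \mathcal{F}_{k-1}] \le \exp(s^2\mu^2/2)$. Peeling off the summands one at a time via the tower property, $\EE[e^{s(X_n-X_0)}] = \EE[\, e^{s\sum_{k=1}^{n-1}D_k}\,\EE[e^{sD_n}\mid\mathcal{F}_{n-1}]\,] \le e^{s^2\mu^2/2}\,\EE[e^{s\sum_{k=1}^{n-1}D_k}]$, and iterating down to $k=1$ yields $\EE[e^{s(X_n-X_0)}] \le \exp(n s^2\mu^2/2)$. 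Markov's inequality then gives $\PP(X_n - X_0 \ge t) \le \exp(n s^2\mu^2/2 - st)$, and optimizing over $s$ by taking $s = t/(n\mu^2)$ produces the one-sided tail bound $\PP(X_n-X_0 \ge t) \le \exp(-t^2/(2n\mu^2))$.

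To obtain the two-sided bound in the statement I would apply the identical argument to the martingale $\{-X_k\}_{k=0}^\infty$, whose differences $-D_k$ satisfy the same hypotheses, and combine the two tails by a union bound: $\PP(|X_n - X_0| \ge t) \le 2\exp(-t^2/(2n\mu^2))$. Solving $2\exp(-t^2/(2n\mu^2)) = \delta$ gives $t = \mu\sqrt{2n\log(2/\delta)}$; since $2\log(2/\delta) = 2\log 2 + 2\log(1/\delta) \le 4\log(1/\delta)$ whenever $\delta \le 1/2$, the slightly larger value $t = 2\mu\sqrt{n\log(1/\delta)}$ is an admissible choice, which is exactly the bound claimed in the lemma.

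Because this is a textbook result there is no genuine obstacle; the only step that requires care is the conditional moment-generating-function estimate (Hoeffding's lemma) together with the successive conditioning that converts the MGF of the sum into a product of conditional MGFs. This is the precise point where the martingale structure — rather than mere independence — is exploited, via the tower property applied one increment at a time.
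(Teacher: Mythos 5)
Your proof is the standard Chernoff/Hoeffding's-lemma argument for Azuma--Hoeffding, and it is correct; the paper itself states this lemma as a known textbook result and gives no proof, so there is nothing to compare against. The one small point worth noting is that your final constant-chasing step (absorbing the factor of $2$ via $2\log(2/\delta)\le 4\log(1/\delta)$) requires $\delta\le 1/2$, which you acknowledge explicitly; the lemma as stated in the paper is implicitly intended for small $\delta$, and its invocation in the regret proof is unaffected, so this is not a substantive gap.
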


\begin{lemma}[Lemma 11 in \citet{abbasi2011improved}]\label{lemma:sumcontext}
For any $\{\xb_t\}_{t=1}^T \subset \RR^d$ satisfying that $\|\xb_t\|_2 \leq L$, let $\Ab_0 = \lambda \Ib$ and $\Ab_t = \Ab_0 + \sum_{i=1}^{t-1}\xb_i\xb_i^\top$, then we have
\begin{align}
    \sum_{t=1}^T \min\{1, \|\xb_t\|_{\Ab_{t-1}^{-1}}\}^2 \leq 2d\log\frac{d\lambda+TL^2}{d \lambda}.\notag
\end{align}
\end{lemma}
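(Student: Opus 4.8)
The plan is to prove this by the standard elliptical potential (determinant-ratio) argument, since the left-hand side is precisely the quantity controlled in Lemma~11 of \citet{abbasi2011improved}. First I would record the harmless reduction $\min\{1, \|\xb_t\|_{\Ab^{-1}}\}^2 = \min\{1, \|\xb_t\|_{\Ab^{-1}}^2\}$, which holds because squaring is monotone on $[0,\infty)$. I would also fix the index convention so that the matrix appearing in the $t$-th norm is the one that accumulates $\xb_1,\dots,\xb_{t-1}$ but not yet $\xb_t$ (i.e. $\Ab_t = \Ab_0 + \sum_{i=1}^{t-1}\xb_i\xb_i^\top$, so that the rank-one update added at step $t$ is $\xb_t\xb_t^\top$ and $\Ab_{t+1} = \Ab_t + \xb_t\xb_t^\top$); this is the indexing that makes the subsequent telescoping work.

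The key step is to turn each summand into a ratio of consecutive determinants. By the matrix determinant lemma, for every $t$,
\begin{align}
\det(\Ab_{t+1}) = \det(\Ab_t)\big(1 + \|\xb_t\|_{\Ab_t^{-1}}^2\big),\notag
\end{align}
so that $1 + \|\xb_t\|_{\Ab_t^{-1}}^2 = \det(\Ab_{t+1})/\det(\Ab_t)$. I would then invoke the elementary scalar inequality $\min\{1, y\} \leq 2\log(1+y)$, valid for all $y \geq 0$ (which follows by noting that $2\log(1+y)-y$ is nonnegative and increasing on $[0,1]$, and that $2\log(1+y) > 1$ once $y > 1$). Applying it with $y = \|\xb_t\|_{\Ab_t^{-1}}^2$ and summing makes the bound telescope:
\begin{align}
\sum_{t=1}^T \min\{1, \|\xb_t\|_{\Ab_t^{-1}}^2\} \leq 2\sum_{t=1}^T \log\frac{\det(\Ab_{t+1})}{\det(\Ab_t)} = 2\log\frac{\det(\Ab_{T+1})}{\det(\Ab_1)},\notag
\end{align}
where $\Ab_1 = \Ab_0 = \lambda\Ib$, hence $\det(\Ab_1) = \lambda^d$.

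It remains to bound the surviving log-determinant by a trace argument. Since $\Ab_{T+1} = \lambda\Ib + \sum_{i=1}^T \xb_i\xb_i^\top$ is positive definite with $\mathrm{trace}(\Ab_{T+1}) = d\lambda + \sum_{i=1}^T\|\xb_i\|_2^2 \leq d\lambda + TL^2$, the AM-GM inequality applied to its eigenvalues gives $\det(\Ab_{T+1}) \leq \big((d\lambda + TL^2)/d\big)^d$. Substituting this together with $\det(\Ab_1) = \lambda^d$ into the telescoped bound yields $2d\log\big((d\lambda + TL^2)/(d\lambda)\big)$, exactly the claimed estimate.

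I do not anticipate a real obstacle here, as this is a cited, standard lemma. The only two points requiring care are the off-by-one in the index convention (so that the norm in the $t$-th term and the $t$-th rank-one update refer to the same matrix and the determinant genuinely telescopes) and the verification of the scalar inequality $\min\{1,y\} \leq 2\log(1+y)$; both are routine, so in the paper I would either reproduce this short derivation or simply cite \citet{abbasi2011improved} directly.
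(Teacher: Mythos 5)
Your proof is correct, and the paper itself gives no proof of this lemma---it is imported verbatim as Lemma 11 of \citet{abbasi2011improved}, whose own argument is exactly the determinant-ratio/telescoping route you describe (matrix determinant lemma, the scalar bound $\min\{1,y\}\leq 2\log(1+y)$, and the trace--AM-GM bound on $\det(\Ab_{T+1})$). You are also right to flag the off-by-one in the stated index convention: as written, $\Ab_{t-1}$ omits $\xb_{t-1}$, and the intended (and standard) reading is the one you adopt, with the $t$-th norm taken in the matrix accumulating $\xb_1,\dots,\xb_{t-1}$.
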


\begin{lemma}[Lemma 12 in \citet{abbasi2011improved}]\label{lemma:det}
Suppose $\Ab, \Bb\in \RR^{d \times d}$ are two positive definite matrices satisfying that $\Ab \succeq \Bb$, then for any $\xb \in \RR^d$, $\|\xb\|_{\Ab} \leq \|\xb\|_{\Bb}\cdot \sqrt{\det(\Ab)/\det(\Bb)}$.
\end{lemma}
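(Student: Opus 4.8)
The plan is to reduce the claimed norm inequality to a statement about the eigenvalues of a single auxiliary matrix. Since $\|\xb\|_{\Ab}^2 = \xb^\top\Ab\xb$, squaring both sides shows it suffices to prove that for every nonzero $\xb \in \RR^d$,
\begin{align}
\frac{\xb^\top \Ab \xb}{\xb^\top \Bb \xb} \leq \frac{\det(\Ab)}{\det(\Bb)}. \notag
\end{align}
Because $\Bb \succ 0$, the denominator is strictly positive for $\xb \neq 0$, so the quotient is well defined, and it is a standard fact that its supremum over nonzero $\xb$ equals the largest generalized eigenvalue $\lambda_{\max}(\Bb^{-1}\Ab)$.

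I would then pass to the symmetric matrix $\Mb = \Bb^{-1/2}\Ab\Bb^{-1/2}$, where $\Bb^{-1/2}$ is the invertible inverse square root of $\Bb$. The matrix $\Bb^{-1}\Ab = \Bb^{-1/2}\Mb\Bb^{1/2}$ is similar to $\Mb$, hence they share the same eigenvalues; in particular $\lambda_{\max}(\Bb^{-1}\Ab) = \lambda_{\max}(\Mb)$. Two facts about $\Mb$ then finish the argument. First, conjugating the hypothesis $\Ab \succeq \Bb$ by $\Bb^{-1/2}$ gives $\Mb \succeq \Ib$, so $\Mb$ is symmetric positive definite with all eigenvalues $\mu_1,\dots,\mu_d \geq 1$. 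Second, by multiplicativity of the determinant, $\det(\Mb) = \det(\Bb^{-1/2})\det(\Ab)\det(\Bb^{-1/2}) = \det(\Ab)/\det(\Bb)$.

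The key (and essentially the only nontrivial) step is the elementary observation that a product of numbers all at least one dominates any single factor:
\begin{align}
\lambda_{\max}(\Mb) = \mu_{\max} = \mu_{\max}\prod_{i \neq \max} 1 \leq \mu_{\max}\prod_{i \neq \max}\mu_i = \prod_{i=1}^d \mu_i = \det(\Mb). \notag
\end{align}
Chaining these relations yields $\xb^\top \Ab \xb/(\xb^\top\Bb\xb) \leq \lambda_{\max}(\Mb) \leq \det(\Mb) = \det(\Ab)/\det(\Bb)$, and taking square roots gives the claim. I do not anticipate a genuine obstacle here: the only points requiring care are that $\Bb \succ 0$ guarantees both the invertibility of $\Bb^{1/2}$ and the positivity of the denominator, and that the inequality $\Mb \succeq \Ib$ is precisely what forces every eigenvalue of $\Mb$ to be at least one, which is what makes the product-versus-max comparison valid.
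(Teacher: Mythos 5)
Your argument is correct: conjugating by $\Bb^{-1/2}$, noting $\Mb=\Bb^{-1/2}\Ab\Bb^{-1/2}\succeq \Ib$ so that every eigenvalue of $\Mb$ is at least one, and bounding $\lambda_{\max}(\Mb)$ by $\det(\Mb)=\det(\Ab)/\det(\Bb)$ is exactly the standard proof of this fact. The paper itself states this lemma without proof, importing it from \citet{abbasi2011improved}, whose proof proceeds along essentially the same lines as yours, so there is nothing to reconcile.
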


Now we are ready to prove Theorem \ref{thm:regret}. 

\begin{proof} [Proof of Theorem \ref{thm:regret}]
Let $K(T)-1$ be the number of epochs when Algorithm \ref{algorithm} executes $t = T$ rounds, and $t_{K(T)} = T+1$.
Suppose the event in Lemma \ref{lemma:theta-ball} holds. We have
 \begin{align}
    \text{Regret}(T) &= \sum_{k=0}^{K(T)-1}\sum_{t=t_k}^{t_{k+1}-1} \big[\vvalue^*(s_t) - \vvalue^\pi_t(s_t)\big]\leq\underbrace{\sum_{k=0}^{K(T)-1}\sum_{t=t_k}^{t_{k+1}-1} \big[\vvalue_k(s_t)-\vvalue^\pi_t(s_t)\big]}_{\text{Regret}'(T)},\label{eq:0}
  \end{align}
where the last inequality holds because of Lemma \ref{lemma:UCB}. For $\text{Regret}'(T)$, we have 
\begin{align}
 \text{Regret}'(T)&=\sum_{k=0}^{K(T)-1}\sum_{t=t_k}^{t_{k+1}-1} \Big[\qvalue_k(s_t,a_t)-\vvalue^\pi_t(s_t)\Big],\label{eq:1}
\end{align}
where the  equality holds because of the policy in Line \ref{algorithm:line99} in Algorithm \ref{algorithm}.
By Lemma \ref{lemma:V-diff}, with the selection of $U$, for $t_k\leq t\leq  t_{k+1}-1$, the $\qvalue_k(s_t,a_t)$ in Algorithm \ref{algorithm} satisfies
\begin{align}
    \qvalue_k(s_t,a_t)&\leq \reward(s_t,a_t) + \gamma  \big[\big\la \btheta_{t}, \bphi_{\vvalue_k}(s_t,a_t)\big\ra\big]_{[0, 1/(1-\gamma)]} + (1-\gamma)/T, \label{Qt-update}
\end{align}
By the Bellman equation and the fact that $a_t = \pi(s_t, t)$, 
we have
\begin{align}
    \vvalue^\pi_t(s_t)&=\reward(s_t,a_t)+\gamma[\PP \vvalue^\pi_{t+1}](s_t,a_t)\notag\\
    &=\reward(s_t,a_t)+\gamma \sum_{s' \in \cS} \big\la \btheta^*, \bphi(s'|s_t,a_t)\big \ra\vvalue^\pi_{t+1}(s') ds'\notag\\
    &=\reward(s_t,a_t) + \gamma  \big\la \btheta^*, \bphi_{\vvalue^\pi_{t+1}}(s_t,a_t)\big\ra,\label{Qpi-update}
\end{align}
where the second and the third equalities hold because of Definition \ref{assumption-linear}. Substituting \eqref{Qt-update} and \eqref{Qpi-update} into \eqref{eq:1}, we have
\begin{align}
    &\text{Regret}'(T)-(1-\gamma) \notag \\
    &\qquad\leq\gamma\sum_{k=0}^{K(T)-1}\sum_{t=t_k}^{t_{k+1}-1} \big(\big[\big\la \btheta_{t}, \bphi_{\vvalue_k}(s_t,a_t)\big\ra\big]_{[0, 1/(1-\gamma)]}-\big\la \btheta^*, \bphi_{\vvalue^\pi_{t+1}}(s_t,a_t)\big\ra\big)\notag\\
    &\qquad=\underbrace{\gamma \sum_{k=0}^{K(T)-1}\sum_{t=t_k}^{t_{k+1}-1}\big(\big[\big\la \btheta_{t}, \bphi_{\vvalue_k}(s_t,a_t)\big\ra\big]_{[0, 1/(1-\gamma)]}-\big\la \btheta^*, \bphi_{\vvalue_k}(s_t,a_t)\big\ra\big)}_{I_1} \notag \\
    &\qquad\quad  +\gamma\sum_{k=0}^{K(T)-1}\sum_{t=t_k}^{t_{k+1}-1} \big\la \btheta^*,\bphi_{\vvalue_k}(s_t,a_t)- \bphi_{\vvalue^\pi_{t+1}}(s_t,a_t)\big\ra\notag\\
    &\qquad=I_1+I_2+I_3,\label{eq:regret}
\end{align}
where
\begin{align}
    I_2 &= \gamma\sum_{k=0}^{K(T)-1}\sum_{t=t_k}^{t_{k+1}-1} \Big\{\big[\PP(\vvalue_k-\vvalue^\pi_{t+1})\big](s_t,a_t)-\big(\vvalue_k(s_{t+1})-\vvalue^\pi_{t+1}(s_{t+1})\big)\Big\},\notag \\
    I_3& = \gamma \sum_{k=0}^{K(T)-1}\sum_{t=t_k}^{t_{k+1}-1}\big(\vvalue_k(s_{t+1})-\vvalue^\pi_{t+1}(s_{t+1})\big).\notag
\end{align}
Next we bound $I_1, I_2$ and $I_3$ separately. 
For term $I_1$, we have
\begin{align}
        I_1&\leq\gamma \sum_{k=0}^{K(T)-1}\sum_{t=t_k}^{t_{k+1}-1}\big|\big\la \btheta_{t}-\btheta^*, \bphi_{\vvalue_k}(s_t,a_t)\big\ra\big|\notag\\
        &\leq  \sum_{k=0}^{K(T)-1}\sum_{t=t_k}^{t_{k+1}-1}\big(\big\|\btheta_{t}-\hat\btheta_k\big\|_{\bSigma_{t}}+\big\|\hat\btheta_k-\btheta^*\big\|_{\bSigma_{t}}\big)\|\bphi_{\vvalue_k}(s_t,a_t)\|_{\bSigma_{t}^{-1}}\notag\\
        & \leq 
        2\sum_{k=0}^{K(T)-1}\sum_{t=t_k}^{t_{k+1}-1}
        \big(\big\|\btheta_{t}-\hat\btheta_k\big\|_{\bSigma_{t_k}}+\big\|\hat\btheta_k-\btheta^*\big\|_{\bSigma_{t_k}}\big)\|\bphi_{\vvalue_k}(s_t,a_t)\|_{\bSigma_{t}^{-1}}\notag\\
        &\leq  4\beta \sum_{k=0}^{K(T)-1}\sum_{t=t_k}^{t_{k+1}-1} \|\bphi_{\vvalue_k}(s_t,a_t)\|_{\bSigma_{t}^{-1}},\label{eq:regret_11}
\end{align}
where the first inequality holds since $0 \leq \big\la \btheta^*, \bphi_{\vvalue_k}(s_t,a_t)\big\ra \leq 1/(1-\gamma)$, the second inequality holds due to the Cauchy-Schwarz inequality and triangle inequality, the third inequality holds due to Lemma \ref{lemma:det} with the fact that $\det(\bSigma_{t}) \leq 2\det(\bSigma_{t_k})$, and the fourth inequality holds due to the fact that $\btheta_t \in \cC_k$ from Lemma \ref{lemma:theta-ball}. Meanwhile, we have 
\begin{align}
    \big[\big\la \btheta_{t}, \bphi_{\vvalue_k}(s_t,a_t)\big\ra\big]_{[0, 1/(1-\gamma)]}-\la \btheta^*, \bphi_{\vvalue_k}(s_t,a_t)\big\ra  \leq \frac{1}{1-\gamma},\label{eq:regret_12}
\end{align}
where we use the fact that $0 \leq \vvalue^* \leq 1/(1-\gamma)$. 
Combining \eqref{eq:regret_11} and \eqref{eq:regret_12}, $I_1$ can be further bounded as
\begin{align}
    I_1 &\leq \sum_{k=0}^{K(T)-1}\sum_{t=t_k}^{t_{k+1}-1}\min\bigg\{\frac{1}{1-\gamma},  4\beta  \|\bphi_{\vvalue_k}(s_t,a_t)\|_{\bSigma_{t}^{-1}}\bigg\}\notag \\
    &\leq 4\beta \sum_{k=0}^{K(T)-1}\sum_{t=t_k}^{t_{k+1}-1}\min\bigg\{1, \|\bphi_{\vvalue_k}(s_t,a_t)\|_{\bSigma_{t}^{-1}}\bigg\}\notag \\
    & \leq 4\beta\sqrt{T\sum_{k=0}^{K(T)-1}\sum_{t=t_k}^{t_{k+1}-1}\min\bigg\{1, \|\bphi_{\vvalue_k}(s_t,a_t)\|_{\bSigma_{t}^{-1}}^2\bigg\}},\label{eq:i_1_1}
\end{align}
where the second inequality holds because $1/(1-\gamma)\leq \beta$, the last inequality holds due to Cauchy-Schwarz inequality. By Lemma \ref{lemma:sumcontext}, we have
\begin{align}
    \sum_{k=0}^{K(T)-1}\sum_{t=t_k}^{t_{k+1}-1}\min\bigg\{1, \|\bphi_{{\vvalue}_{k}}(s_t,a_t)\|_{\bSigma_{t}^{-1}}^2\bigg\} \leq 2d\log\frac{\lambda+T/(1-\gamma)^2}{\lambda},\label{eq:i_1_2}
\end{align}
where we use the fact $\|\bphi_{\vvalue_k}(s_t,a_t)\|_2 \leq \sqrt{d}/(1-\gamma)$ deduced by Definition \ref{assumption-linear} and $|\vvalue_k|\leq 1/(1-\gamma)$ implied by Lemma \ref{lemma:UCB}. 
Substituting \eqref{eq:i_1_2} into \eqref{eq:i_1_1}, we have
\begin{align}
    I_1 \leq 6\beta\sqrt{dT\log\frac{\lambda+T/(1-\gamma)^2}{\lambda}}.\label{eq:i_1_3}
\end{align}
For the term $I_2$, it is easy to verify that $\big[\PP(\vvalue_k-\vvalue^\pi_{t+1})\big](s_t,a_t)-\big(\vvalue_k(s_{t+1})-\vvalue^\pi_{t+1}(s_{t+1})\big)$ forms a martingale difference sequence. Meanwhile, we have $0\leq \vvalue_k(s)-\vvalue^\pi_{t+1}(s)\leq 1/(1-\gamma)$ implied by Lemma \ref{lemma:UCB}, which implies that
\begin{align}
    \bigg|\big[\PP(\vvalue_k-\vvalue^\pi_{t+1})\big](s_t,a_t)-\big(\vvalue_k(s_{t+1})-\vvalue^\pi_{t+1}(s_{t+1})\big)\bigg|\leq \frac 1 {1-\gamma}.\notag
\end{align}
Thus by Azuma–Hoeffding inequality in Lemma \ref{lemma:azuma}, we have
\begin{align}
    I_2=\gamma\sum_{k=0}^{K(T)-1}\sum_{t=t_k}^{t_{k+1}-1} \big[\PP(\vvalue_k-\vvalue^\pi_{t+1})\big](s_t,a_t)-\big(\vvalue_k(s_{t+1})-\vvalue^\pi_{t+1}(s_{t+1})\big)\leq  \frac{2\gamma}{1-\gamma}\sqrt{T\ln \frac{1}{\delta}}.\label{eq:I_2}
\end{align}
For the term $I_3$, we have
\begin{align}
    I_3&=\gamma \sum_{k=0}^{K(T)-1}\sum_{t=t_k}^{t_{k+1}-1}\big(\vvalue_k(s_{t+1})-\vvalue^\pi_{t+1}(s_{t+1})\big)\notag\\
    & = \gamma \sum_{k=0}^{K(T)-1}\bigg[\sum_{t=t_k}^{t_{k+1}-1}\big(\vvalue_k(s_{t})-\vvalue^\pi_t(s_{t})\big) - \big(\vvalue_k(s_{t_k})-\vvalue^\pi_{t_k}(s_{t_k})\big)\notag \\
    &\qquad +\big(\vvalue_k(s_{t_{k+1}})-\vvalue^\pi_{t_{k+1}}(s_{t_{k+1}})\big)\bigg]\notag\\
    & \leq \gamma \sum_{k=0}^{K(T)-1}\bigg[\sum_{t=t_k}^{t_{k+1}-1}\big(\vvalue_k(s_{t})-\vvalue^\pi_t(s_{t})\big) + \frac{2}{1-\gamma}\bigg]\notag\\
    &= \gamma \text{Regret}'(T)+\frac{2K(T)\gamma}{1-\gamma},\label{eq:I_3:1}
\end{align}
where the first inequality holds due to $0\leq \vvalue_k(s)-\vvalue^\pi_{t}(s)\leq 1/(1-\gamma) $ implied by Lemma \ref{lemma:UCB}. 
Finally, substituting \eqref{eq:i_1_3}, \eqref{eq:I_2} and \eqref{eq:I_3:1} into \eqref{eq:regret}, we have 
\begin{align}
    &\text{Regret}'(T) -(1-\gamma)\notag \\
    & \qquad\leq 6\beta\sqrt{dT\log\frac{\lambda+T/(1-\gamma)^2}{\lambda}} + \frac{2\gamma}{1-\gamma}\sqrt{T\ln \frac{1}{\delta}} + \gamma \text{Regret}'(T)+\frac{2K(T)\gamma}{1-\gamma}.\label{eq:regret_2}
\end{align}
Thus, we have
\begin{align}
    \text{Regret}'(T) 
    &\leq \frac{6\beta}{1-\gamma}\sqrt{dT\log\frac{\lambda+T/(1-\gamma)^2}{\lambda}} + \frac{2\gamma}{(1-\gamma)^2}\sqrt{T\ln \frac{1}{\delta}} + \frac{2K(T)\gamma}{(1-\gamma)^2}+1.\label{eq:regret_3}
\end{align}
Substituting $\beta$ and \eqref{eq:regret_3} into \eqref{eq:0} and rearranging it, we have
\begin{align}
    \text{Regret}(T) &\leq \frac{6}{1-\gamma}\sqrt{dT\log\frac{\lambda+T/(1-\gamma)^2}{\lambda}}\bigg(\frac{1}{1-\gamma}\sqrt{d\log\frac{\lambda(1-\gamma)^2+Td}{\delta\lambda(1-\gamma)^2}} + \sqrt{\lambda d}\bigg)\notag \\
    & \qquad + \frac{2\gamma}{(1-\gamma)^2}\sqrt{T\ln \frac{1}{\delta}}  +1+\frac{2K(T)\gamma}{(1-\gamma)^2}\notag \\
    & \leq \frac{6}{1-\gamma}\sqrt{dT\log\frac{\lambda+T/(1-\gamma)^2}{\lambda}}\bigg(\frac{1}{1-\gamma}\sqrt{d\log\frac{\lambda(1-\gamma)^2+Td}{\delta\lambda(1-\gamma)^2}} + \sqrt{\lambda d}\bigg)\notag \\
    & \qquad + \frac{3\sqrt{T\log 1/\delta}}{(1-\gamma)^2} + 1 +\frac{4d}{(1-\gamma)^2}\log \frac{\lambda +Td}{\lambda(1-\gamma)^2}\notag,
\end{align}
where the last inequality holds due to Lemma \ref{lemma:boundk} and the fact that $U = \lceil (\log (T/(1-\gamma))/(1-\gamma)\rceil$. Taking an union bound of Lemma \ref{lemma:theta-ball} and Lemma \ref{lemma:azuma}, we conclude the proof.

\end{proof}

\subsection{Proof of Theorem \ref{thm:1} }\label{sec:proof:thm1}
In this subsection, we will prove Theorem \ref{thm:1}. Besides Lemmas \ref{lemma:regrettrans}-\ref{lemma:kl} in Section \ref{subsec:lower bound}, we need the following additional technical lemma, which is a version of Pinsker's inequality adapted from \citet{jaksch2010near} that upper bounds the total variation distance between two signed measure in terms of the Kullback–Leibler (KL) divergence.
\begin{lemma}[Pinsker's inequality]\label{lemma:pinsker}
Denote $\sbb = \{s_1,\dots,s_T\} \in \cS^{T}$ as the observed states from step $1$ to $T$. Then for any two distributions $\cP_1$ and $\cP_2$ over $\cS^{T}$ and any bounded function $f: \cS^{T}\rightarrow [0, B]$, we have
\begin{align}
    \EE_1 f(\sbb) - \EE_2 f(\sbb) \leq \sqrt{\log 2/2}B\sqrt{\text{KL}(\cP_2\|\cP_1)},\notag
\end{align}
where $\EE_1$ and $\EE_2$ denote expectations with respect to $\cP_1$ and $\cP_2$. 
\end{lemma}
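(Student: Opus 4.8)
The plan is to reduce the statement to the textbook Pinsker inequality by first controlling the difference of expectations by the total variation (equivalently $\ell_1$) distance between the two laws on $\cS^T$, and only at the very last step passing from total variation to KL divergence. Since $\cS$ is countable, every object here is a countable sum over $\sbb \in \cS^T$, so no measure-theoretic subtleties intervene.

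First I would write the difference of expectations as an integral against the signed measure $\cP_1 - \cP_2$:
\begin{align}
\EE_1 f(\sbb) - \EE_2 f(\sbb) = \sum_{\sbb \in \cS^T}\big(\cP_1(\sbb) - \cP_2(\sbb)\big)f(\sbb).\notag
\end{align}
The key observation is that $\sum_{\sbb}(\cP_1(\sbb) - \cP_2(\sbb)) = 0$, so $f$ may be replaced by $f - c$ for any constant $c$ without changing the left-hand side. Choosing $c = B/2$ centers the bounded function so that $|f(\sbb) - B/2| \le B/2$, whence
\begin{align}
\EE_1 f(\sbb) - \EE_2 f(\sbb) = \sum_{\sbb}\big(\cP_1(\sbb) - \cP_2(\sbb)\big)\big(f(\sbb) - B/2\big) \le \frac{B}{2}\sum_{\sbb}\big|\cP_1(\sbb) - \cP_2(\sbb)\big| = \frac{B}{2}\big\|\cP_1 - \cP_2\big\|_1.\notag
\end{align}
Finally I would apply the standard Pinsker inequality to the pair $(\cP_2, \cP_1)$. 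Because the $\ell_1$ distance is symmetric, the direction of the divergence is irrelevant, and Pinsker yields $\|\cP_1 - \cP_2\|_1 \le \sqrt{2\log 2 \cdot \text{KL}(\cP_2\|\cP_1)}$ in the logarithmic base matching the paper's definition of KL; substituting this into the previous display produces exactly the claimed factor $B\sqrt{\log 2 / 2}$.

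The argument carries no serious obstacle—it is bookkeeping layered on top of the classical Pinsker inequality—but two points deserve care. The first is the centering step: without subtracting $B/2$ one would only reach the weaker bound $B\|\cP_1 - \cP_2\|_1$ and lose the factor of two needed for the stated constant, and it is precisely the identity $\sum_{\sbb}(\cP_1 - \cP_2) = 0$ that recovers it. The second is tracking the exact constant, which hinges on the logarithmic base used in the definition of KL divergence; the $\log 2$ factor is nothing but the nats-to-bits conversion inherent in the version of Pinsker adapted from \citet{jaksch2010near}, and as long as the same convention is propagated into the later lower-bound computation (Lemmas~\ref{lemma:n1} and \ref{lemma:kl}), the constant simply carries through unchanged.
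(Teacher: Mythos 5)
Your proof is correct. The paper itself gives no proof of this lemma (it is cited as adapted from \citet{jaksch2010near}), and your argument is the standard one: centering $f$ at $B/2$ using $\sum_{\sbb}(\cP_1(\sbb)-\cP_2(\sbb))=0$ to get the total-variation bound $\tfrac{B}{2}\|\cP_1-\cP_2\|_1$, then invoking classical Pinsker, with the symmetry of $\ell_1$ justifying the order of arguments in the KL term. Your bookkeeping of the $\log 2$ factor as the base-conversion constant is the right way to reconcile the stated constant with the KL convention used in Lemmas~\ref{lemma:n1} and~\ref{lemma:kl}.
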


Now we begin our proof. The proof roadmap is similar to that in \citet{jaksch2010near} which aims to prove lower bound for tabular MDPs. 

\begin{proof}[Proof of Theorem \ref{thm:1}]
First, we can verify that all assumptions in Lemmas \ref{lemma:n1} and \ref{lemma:kl} are satisfied with the assumptions on $\gamma$ and $T$ and the choice of $\delta$ and $\Delta$. For a given $\btheta$, the optimal policy for $M(\cS, \cA, \gamma, \reward, \PP_{\btheta})$ is to choose action 
$\ab_{\btheta} = [\text{sgn}(\theta_i)]_{i=1}^{d-1}$ at $\state_0$ and $\state_1$. Therefore by the optimality Bellman equation, we know that $\vvalue^*(\state_0)$ and $\vvalue^*(\state_1)$ satisfy the following equations
\begin{align}
    \vvalue^*(\state_0) = \reward(\state_0, \ab_{\btheta}) + \gamma \EE_{s \sim \PP_{\btheta}(\cdot|\state_0, \ab_{\btheta})}\vvalue^*(s),\ \vvalue^*(\state_1) = \reward(\state_1, \ab_{\btheta}) + \gamma \EE_{s \sim \PP_{\btheta}(\cdot|\state_1, \ab_{\btheta})}\vvalue^*(s).\label{eq:thm1_00}
\end{align}
By the definition of our MDP, we have $\reward(\state_0, \ab_{\btheta}) = 0$, $\reward(\state_1, \ab_{\btheta}) = 1$, and \begin{align}
    &\PP_{\btheta}(\state_0|\state_0, \ab_{\btheta}) = 1-\delta - \la \ab_{\btheta}, \btheta\ra = 1-\delta - \Delta, \notag \\
    &\PP_{\btheta}(\state_1|\state_0, \ab_{\btheta}) = \delta + \la \ab_{\btheta}, \btheta\ra = \delta + \Delta, \notag\\
    &\PP_{\btheta}(\state_0|\state_1, \ab) = \delta,\notag \\ &\PP_{\btheta}(\state_1|\state_1, \ab) = 1-\delta.\notag
\end{align}
Therefore, substituting the above definitions of $\reward$ and $\PP_{\btheta}$ into \eqref{eq:thm1_00}, we have the following equations.
\begin{align}
\begin{split}
    &\vvalue^*(\state_0) = 0 + \gamma\cdot (1-\delta -\Delta)\vvalue^*(\state_0) + \gamma \cdot (\delta + \Delta)\vvalue^*(\state_1),\\
    &\vvalue^*(\state_1) = 1 + \gamma \cdot \delta \vvalue^*(\state_0) + \gamma \cdot (1-\delta)\vvalue^*(\state_1). 
    \end{split}\label{eq:thm1_01} 
\end{align}
From \eqref{eq:thm1_01}, we can calculate $\vvalue^*(\state_0)$ and $\vvalue^*(\state_1)$ as follows
\begin{align}\label{eq:optimalv}
\vvalue^*(\state_0) = \frac{\gamma(\Delta + \delta)}{(1-\gamma)(\gamma(2\delta+\Delta - 1) +1)},\  \vvalue^*(\state_1) = \frac{\gamma(\Delta + \delta) + 1-\gamma}{(1-\gamma)(\gamma(2\delta+\Delta - 1) +1)}.
\end{align}
Then by Lemma \ref{lemma:regrettrans} we have
\begin{align}
    \EE_{\btheta}\text{Regret}(\btheta) 
    & \geq \EE_{\btheta}\bigg[ \sum_{t=1}^T\vvalue^*(s_t) - \frac{1}{1-\gamma}\sum_{t' = 1}^T \reward(s_{t'}, a_{t'}) - \frac{\gamma}{(1-\gamma)^2}\bigg].\notag
\end{align}
Now we do the summation over $2^{d-1}$ possible $\btheta$, then the expectation of regret can be written as follows:
\begin{align}
    &\frac{1}{|\bTheta|}\sum_{\btheta}\bigg[\EE_{\btheta} \text{Regret}(\btheta) +\frac{\gamma}{(1-\gamma)^2}\bigg]\notag \\
    & \qquad\geq 
    \frac{1}{|\bTheta|}\sum_{\btheta}\EE_{\btheta}\bigg[N_0\vvalue^*(\state_0) + N_1\vvalue^*(\state_1) - \frac{1}{1-\gamma}\sum_{ t= 1}^T \reward(s_{t}, a_{t}) \bigg]\notag \\
    &\qquad=  \frac{1}{1-\gamma}\frac{1}{|\bTheta|}\sum_{\btheta}\EE_{\btheta}\bigg[N_0\frac{\gamma(\Delta+\delta)}{\gamma(2\delta+\Delta - 1) +1} + N_1\bigg(\frac{\gamma(\Delta+\delta)+1-\gamma}{\gamma(2\delta+\Delta - 1) +1} - 1\bigg)\bigg]\notag \\
    & \qquad= \frac{1}{1-\gamma}\frac{1}{|\bTheta|}\sum_{\btheta}\EE_{\btheta}\bigg[N_0\frac{\gamma(\Delta+\delta)}{\gamma(2\delta+\Delta - 1) +1} + N_1\frac{-\gamma\delta}{\gamma(2\delta+\Delta - 1) +1}\bigg]\notag\\
    & \qquad= \frac{1}{1-\gamma}\frac{1}{|\bTheta|}\sum_{\btheta}\EE_{\btheta}\bigg[T\frac{\gamma(\Delta+\delta)}{\gamma(2\delta+\Delta - 1) +1} - N_1\frac{\gamma(\Delta+2\delta)}{\gamma(2\delta+\Delta - 1) +1}\bigg]\notag\\
    & \qquad= \frac{1}{1-\gamma}T\frac{\gamma(\Delta+\delta)}{\gamma(2\delta+\Delta - 1) +1} - \frac{1}{1-\gamma}\frac{\gamma(\Delta+2\delta)}{\gamma(2\delta+\Delta - 1) +1}\frac{1}{|\bTheta|}\sum_{\btheta}\EE_{\btheta}N_1,\label{eq:998}
\end{align}
where the first equality holds due to the value of $\vvalue^*(\state_0)$, $\vvalue^*(\state_1)$ in \eqref{eq:optimalv} and the fact that $\reward(s_t, a_t) = 1$ for $s_t = \state_1$ and $\reward(s_t, a_t) = 0$ for $s_t = \state_0$. Next we are going to bound $|\bTheta|^{-1}\sum_{\btheta}\EE_{\btheta}N_1$. By Lemma \ref{lemma:n1}, we have
\begin{align}
    \frac{1}{|\bTheta|}\sum_{\btheta}\EE_{\btheta}N_1 &\leq T/2 + \frac{1}{2\delta|\bTheta|}\sum_{\btheta}\sum_{\ab}\la \ab, \btheta\ra \EE_{\btheta}N_0^{\ab}\notag \\
    & = \frac{T}{2}+\frac{1}{2\delta}\frac{\Delta}{(d-1)|\bTheta|}\sum_{j=1}^{d-1}\sum_{\ab}\sum_{\btheta}\EE_{\btheta} (2\ind\{\text{sgn}(\theta_j) =  \text{sgn}(a_j)\}-1) N_0^{\ab}\notag\\
    & \leq \frac{T}{2}+\frac{1}{2\delta}\frac{\Delta}{(d-1)|\bTheta|}\sum_{j=1}^{d-1}\sum_{\ab}\sum_{\btheta}\EE_{\btheta} \ind\{\text{sgn}(\theta_j) =  \text{sgn}(a_j)\} N_0^{\ab},\label{eq:999}
\end{align}
where the second inequality holds since $(2\ind\{\text{sgn}(\theta_j) =  \text{sgn}(a_j)\}-1) \leq \ind\{\text{sgn}(\theta_j) =  \text{sgn}(a_j)\}$. From now on we focus on some specific $j \in [d-1]$.
Taking $\btheta'$ to be the vector which has the same entries as $\btheta$, only except for $j$-th coordinate such as $\theta'_j = -\theta_j$. Then
\begin{align}
&\EE_{\btheta} \ind\{\text{sgn}(\theta_j) =  \text{sgn}(a_j)\} N_0^{\ab} + \EE_{\btheta'} \ind\{\text{sgn}(\theta'_j) =  \text{sgn}(a_j)\} N_0^{\ab}\notag \\
& \qquad= \EE_{\btheta'}N_0^{\ab}+\EE_{\btheta} \ind\{\text{sgn}(\theta_j) =  \text{sgn}(a_j)\} N_0^{\ab} - \EE_{\btheta'} \ind\{\text{sgn}(\theta_j) =  \text{sgn}(a_j)\} N_0^{\ab}.\label{eq:tttt}
\end{align}
Thus taking summation of \eqref{eq:tttt} for all $\ab \in \cA$ and $\btheta \in \bTheta$, we have
\begin{align}
    &2\sum_{\ab}\sum_{\btheta}\EE_{\btheta} \ind\{\text{sgn}(\theta_j) =  \text{sgn}(a_j)\} N_0^{\ab}\notag \\
    &\qquad = \sum_{\btheta}\sum_{\ab}\Big[\EE_{\btheta'}N_0^{\ab}+ \EE_{\btheta}\ind\{\text{sgn}(\theta_j) =  \text{sgn}(a_j)\} N_0^{\ab} - \EE_{\btheta'}\ind\{\text{sgn}(\theta_j) =  \text{sgn}(a_j)\} N_0^{\ab}\Big]\notag \\
    & \qquad= \sum_{\btheta}\Big[\EE_{\btheta'}N_0+ \EE_{\btheta}N_0^{\cA_j^{\btheta}} - \EE_{\btheta'}N_0^{\cA_j^{\btheta}}\Big]\notag\\
    & \qquad\leq \sum_{\btheta}\Big[\EE_{\btheta'}N_0+ \frac{cT}{8}\sqrt{\text{KL}( \cP_{\btheta'}\| \cP_{\btheta})}\Big]\notag\\
    &\qquad \leq \sum_{\btheta}\Big[\EE_{\btheta'}N_0+ \frac{cT\Delta}{d\sqrt{\delta}}\sqrt{\EE_{\btheta }N_0}\Big]\label{eq:100},
\end{align}
where $\cA_j^{\btheta}$ is the set of $\ab$ which satisfies that  $\text{sgn}(\theta_j) = \text{sgn}(a_j)$, $c = 4\sqrt{\log 2}$. The first inequality holds due to Lemma \ref{lemma:pinsker} with the fact that $N_0^{\cA_j^{\btheta}}$ is a function of $s_1,\dots,s_T$ and    $N_0^{\cA_j^{\btheta}} \leq T$, the second inequality holds due to Lemma \ref{lemma:kl}. Substituting \eqref{eq:100} into \eqref{eq:999}, we have
\begin{align}
    \frac{1}{|\bTheta|}\sum_{\btheta}\EE_{\btheta}N_1 &\leq T/2 + \frac{\Delta}{4\delta(d-1)|\bTheta|}\sum_{j=1}^{d-1}\sum_{\btheta}\Big[\EE_{\btheta'}N_0+ cT\frac{\Delta}{d\sqrt{\delta}}\sqrt{\EE_{\btheta }N_0}\Big]\notag \\
    & = T/2 + \frac{\Delta}{4\delta|\bTheta|}\sum_{\btheta}\Big[\EE_{\btheta'}N_0+ cT\frac{\Delta}{d\sqrt{\delta}}\sqrt{\EE_{\btheta }N_0}\Big]\notag \\
    & \leq \frac{T}{2} + \frac{\Delta T}{5\delta}+ \frac{cT^{3/2}\Delta^2}{4d\delta^{3/2}},\label{eq:1001}
\end{align}
where the last inequality holds due to $\EE_{\btheta}N_0,\EE_{\btheta'}N_0 \leq 4T/5$ from Lemma \ref{lemma:n1}. Substituting \eqref{eq:1001} into \eqref{eq:998}, we have
\begin{align}
    &\frac{1}{|\bTheta|}\sum_{\btheta}\bigg[\EE_{\btheta} \text{Regret}(\btheta) +\frac{\gamma}{(1-\gamma)^2}\bigg]\notag \\
    & \qquad\geq  \frac{1}{1-\gamma}T\frac{\gamma(\Delta+\delta)}{\gamma(2\delta+\Delta - 1) +1} - \frac{1}{1-\gamma}\frac{\gamma(\Delta+2\delta)}{\gamma(2\delta+\Delta - 1) +1} \cdot\bigg(\frac{T}{2} + \frac{\Delta T}{5\delta}+ \frac{cT^{3/2}\Delta^2}{4d\delta^{3/2}}\bigg)\notag \\
    & \qquad=\frac{1}{(1-\gamma)(\gamma(2\delta+\Delta - 1) +1)}\bigg[\frac{\gamma \Delta T}{2}-\gamma(\Delta+2\delta)\frac{\Delta T}{5\delta} - \gamma(\Delta+2\delta)\frac{cT^{3/2}\Delta^2}{4d\delta^{3/2}}\bigg]\notag \\
    & \qquad\geq \frac{1}{4(1-\gamma)^2     }\bigg[\frac{\gamma \Delta T}{2}-\gamma(\Delta+2\delta)\frac{\Delta T}{5\delta} - \gamma(\Delta+2\delta)\frac{cT^{3/2}\Delta^2}{4d\delta^{3/2}}\bigg]\notag \\
    &\qquad \geq 
    \frac{1}{4(1-\gamma)^2     }\bigg[\frac{\gamma \Delta T}{2}-\gamma\frac{9\delta}{4}\frac{\Delta T}{5\delta} - \gamma\frac{9\delta}{4}\frac{cT^{3/2}\Delta^2}{4d\delta^{3/2}}\bigg]\notag \\
    &\qquad = \frac{1}{4(1-\gamma)^2}\bigg[\frac{1}{20}\gamma \Delta T - \gamma\frac{9cT^{3/2}\Delta^2}{16d\sqrt{\delta}} \bigg]\notag \\
    &\qquad= \frac{\gamma d\sqrt{T}}{1600c(1-\gamma)^{1.5}},\notag
\end{align}
where the second inequality holds since $\delta = 1-\gamma$ and $\gamma(2\delta+\Delta - 1) +1 \leq 1- \gamma + 3\delta\gamma = 1- \gamma + 3(1-\gamma)\gamma \leq 4(1-\gamma)$, the third inequality holds due to the fact that $4\Delta<\delta\leq 1/3$, the last inequality holds due to the choice of $\Delta$ and $\delta$. Therefore, there exists $\btheta \in \bTheta$ such that
\begin{align}
    \EE_{\btheta} \text{Regret}(\btheta) \geq \frac{\gamma d\sqrt{T}}{1600c(1-\gamma)^{1.5}}- \frac{\gamma}{(1-\gamma)^2}.\notag
\end{align}
Setting $\tilde\btheta = (\btheta^\top, 1)^\top \in \RR^d$ completes our proof. 
\end{proof}

\section{Proof of lemmas in Section \ref{sec:sketchregret}}

\subsection{Proof of Lemma \ref{lemma:theta-ball}}
\begin{proof}[Proof of Lemma \ref{lemma:theta-ball}]
Recall the definition of $\hat\btheta_k$ in Algorithm \ref{algorithm}, we have
\begin{align}
    \hat\btheta_{k}  = \bigg(\lambda \Ib+\sum_{j=0}^{k-1}\sum_{i=t_j}^{t_{j+1}-1}\bphi_{{\vvalue}_j}(s_i,a_i)\bphi_{{\vvalue}_j}(s_i,a_i)^\top\bigg)^{-1}\bigg(\sum_{j=0}^{k-1}\sum_{i=t_j}^{t_{j+1}-1} \bphi_{{\vvalue}_j}(s_i,a_i) \vvalue_j(s_{i+1})\bigg).\notag
\end{align}
It is worth noting that for any $0 \leq j \leq k-1$ and $t_j \leq i \leq t_{j+1}-1$,
\begin{align}
    [\PP\vvalue_j](s_i, a_i) &= \sum_{s'}\PP(s'|s_i, a_i)\vvalue_j(s_i, a_i) \notag \\
    &= \sum_{s'}\la \bphi(s'|s_i,a_i), \btheta^*\ra\vvalue_j(s')\notag \\
    & = \Big\la\sum_{s'} \bphi(s'|s_i,a_i)\vvalue_j(s'), \btheta^*\Big\ra\notag \\
    & = \la \bphi_{{\vvalue}_j}(s_i,a_i), \btheta^*\ra,
\end{align}
thus $\{\vvalue_j(s_{i+1}) - \la \bphi_{\vvalue_j}(s_i,a_i), \btheta^*\ra\}$ forms a martingale difference sequence. Besides, since $0 \leq \vvalue_j(s) \leq 1/(1-\gamma)$ for any $s$, then $\vvalue_j(s_{i+1}) - \la \bphi_{\vvalue_j}(s_i,a_i), \btheta^*\ra$ is a sequence of $1/(1-\gamma)$-subgaussian random variables with zero means. Meanwhile, we have $\|\bphi_{\vvalue_j}(s_i,a_i)\|_2 \leq \sqrt{d}/(1-\gamma)$ and $\|\btheta^*\|_2 \leq S$ by Definition \ref{assumption-linear}. 
By Theorem 2 in \citet{abbasi2011improved}, we have that with probability at least $1-\delta$, $\btheta^*$ belongs to the following set for all $1 \leq k \leq K$:
\begin{align}
    \bigg\{\btheta: \Big\|\bSigma_{t_k}^{1/2}(\btheta - \hat\btheta_k)\Big\|_2 \leq \frac{1}{1-\gamma}\sqrt{d\log\frac{\lambda(1-\gamma)^2+t_kd}{\delta\lambda(1-\gamma)^2}} + \sqrt{\lambda}S\bigg\}.
\end{align}
Finally, by the definition of $\beta_k$ and the fact that $\la \btheta^*, \bphi(s'|s,a)\ra = \PP(s'|s,a)$ for all $(s,a)$, we draw the conclusion that $\btheta^* \in \cB\cap\cC_k$ for $1 \leq k \leq K$. 
\end{proof}
\subsection{ Proof of Lemma \ref{lemma:UCB}}
\begin{proof} [Proof of Lemma \ref{lemma:UCB}]
    We use induction to prove this lemma.
    We only need to prove that for all $0 \leq u \leq U$, $\qvalue^{(u)} \geq \qvalue^*$. We have
    \begin{align}
        \frac 1 {1-\gamma} = \qvalue^{(0)}(s,a)\ge \qvalue^*(s,a),\notag
    \end{align}
    where the inequality holds due to the fact that $Q^*(s,a) \leq 1/(1-\gamma)$ caused by $0 \leq \reward(s,a) \leq 1$. Assume that the statement holds for $u$, then $\qvalue^{(u)}(s,a) \geq \qvalue^*(s,a)$, which leads to $\vvalue^{(u)}(s) \geq \vvalue^*(s)$. Furthermore, we have
    \begin{align}
        \qvalue^{(u+1)}(s,a) - \reward(s,a)&=\gamma \max_{\btheta \in \cB\cap \cC} \big\la \btheta, \bphi_{\vvalue^{(u)}}(s,a)\big\ra\geq \gamma  \big\la \btheta^*, \bphi_{\vvalue^{(u)}}(s,a)\big\ra  =  \gamma \PP \vvalue^{(u)}(s,a),\label{eq:ttt1}
    \end{align}
    where the inequality holds since $\btheta^*\in \cC \cap \cB$ for any $(s,a)\in \cS \times \cA$ due to Lemma \ref{lemma:theta-ball}. 
    We further have
    \begin{align}
        \qvalue^{(u+1)}(s,a)&=\reward(s,a) + \gamma \tilde\PP \vvalue^{(u)}(s,a)\leq 1+\frac{\gamma}{1-\gamma} = \frac{1}{1-\gamma},\notag
    \end{align}
    where $\tilde\PP$ is some distribution, the equality holds since $\btheta \in \cB$, 
 the inequality holds due to the fact that $\vvalue^{(u)}(s) \leq 1/(1-\gamma)$. We also have
    \begin{align}
       \qvalue^{(u+1)}(s,a)&\geq \reward(s,a)+\gamma[\PP {\vvalue}^{(u)}](s,a)\ge \reward(s,a)+\gamma[\PP {\vvalue}^*](s,a)=\qvalue^*(s,a),\notag
    \end{align}
    where the first inequality holds due to \eqref{eq:ttt1}, and the second inequality holds because the induction assumption. Thus the statement holds for $u+1$. Therefore, our conclusion holds. 
\end{proof}

        

\subsection{Proof of Lemma \ref{lemma:V-diff}}
\begin{proof}[Proof of Lemma \ref{lemma:V-diff}]
We first prove the following inequality:
\begin{align}
    \qvalue^{(U)}(s,a) - \qvalue^{(U-1)}(s,a) \leq 2\gamma^{U-1}.\label{eq:V-diff_0}
\end{align}
By the update rule in Algorithm \ref{algorithm:2}, for any $u \geq 2$, we have
\begin{align}
	    &\qvalue^{(u)}(s,a)=  \reward(s,a) + \gamma \max_{\btheta \in \cC\cap\cB} \big\la \btheta, \bphi_{\vvalue^{(u-1)}}(s,a)\big\ra,\notag\\
	    &\qvalue^{(u-1)}(s,a)=  \reward(s,a) + \gamma \max_{\btheta \in \cC\cap\cB} \big\la \btheta, \bphi_{\vvalue^{(u-2)}}(s,a)\big\ra.\notag
\end{align}
Thus for any $(s,a) \in \cS\times \cA$, we have
\begin{align}
    \Big|\qvalue^{(u)}(s,a) - \qvalue^{(u-1)}(s,a)\Big| &= \gamma \bigg|\max_{\btheta \in \cC\cap\cB} \big\la \btheta, \bphi_{\vvalue^{(u-1)}}(s,a)\big\ra - \max_{\btheta \in \cC\cap\cB} \big\la \btheta, \bphi_{\vvalue^{(u-2)}}(s,a)\big\ra\bigg|\notag \\
    & \leq \gamma \max_{\btheta \in \cC\cap\cB}\Big|\big\la \btheta, \bphi_{\vvalue^{(u-1)}}(s,a) - \bphi_{\vvalue^{(u-2)}}(s,a)\big\ra\Big|\label{eq:V-diff_0.5} \\
    & = 
    \gamma \Big|\big\la \tilde\btheta, \bphi_{\vvalue^{(u-1)}}(s,a) - \bphi_{\vvalue^{(u-2)}}(s,a)\big\ra\Big|\notag \\
    & = \gamma \Big|\tilde \PP\big[\vvalue^{(u-1)} - \vvalue^{(u-2)} \big](s,a)\Big|,
    \label{eq:V-diff_1}
\end{align}
where $\tilde\btheta$ is the $\btheta$ which attains the maximum of \eqref{eq:V-diff_0.5}, and $\tilde\PP(s'|s,a) = \la \tilde\btheta, \bphi(s'|s,a)\ra$. The inequality holds due to the contraction property of $\max$ function. Then \eqref{eq:V-diff_1} can be further bounded as follows:
\begin{align}
    \gamma \Big|\tilde \PP\big[\vvalue^{(u-1)} - \vvalue^{(u-2)} \big](s,a)\Big|& \leq \gamma\max_{s' \in \cS} \Big|\vvalue^{(u-1)}(s') - \vvalue^{(u-2)}(s')\Big|\notag \\
    & =  \gamma\max_{s' \in \cS} \Big|\max_{a'\in \cA}\qvalue^{(u-1)}(s', a') - \max_{a'\in \cA}\qvalue^{(u-2)}(s', a')\Big|\notag \\
    & \leq 
    \gamma \max_{(s', a') \in \cS \times \cA}\Big|\qvalue^{(u-1)}(s', a') - \qvalue^{(u-2)}(s', a')\Big|,\label{eq:V-diff_2}
\end{align}
where the first inequality holds due to the fact that $|\tilde \PP f(s,a)| \leq \max_{s' \in \cS} |f(s')|$ for any $(s,a,s')$, the second inequality holds due to the contraction property of max function. Substituting \eqref{eq:V-diff_2} into \eqref{eq:V-diff_1} and taking the maximum over $(s,a)$, we have
\begin{align}
    \max_{(s,a) \in \cS \times \cA}\Big|\qvalue^{(u)}(s,a) - \qvalue^{(u-1)}(s,a)\Big| \leq \gamma \max_{(s,a) \in \cS \times \cA}\Big|\qvalue^{(u-1)}(s,a) - \qvalue^{(u-2)}(s,a)\Big|.\notag
\end{align}
Therefore, we have
\begin{align}
    \max_{(s,a) \in \cS \times \cA}\Big|\qvalue^{(U)}(s,a) - \qvalue^{(U-1)}(s,a)\Big| &\leq \gamma^{U-1}\max_{(s,a) \in \cS \times \cA}\Big|\qvalue^{(1)}(s,a) - \qvalue^{(0)}(s,a)\Big|\notag \\
    & = \gamma^{U-1}\max_{(s,a) \in \cS \times \cA}\bigg|\reward(s,a) + \frac{\gamma}{1-\gamma} - \frac{1}{1-\gamma}\bigg|\notag\\
    & \leq 2\gamma^{U-1},\notag
\end{align}
where the last inequality holds due to the fact that $0\leq \reward(s,a) \leq 1$ for any $(s,a)$. Therefore we prove \eqref{eq:V-diff_0}. To prove the original statement, we have 
\begin{align}
    \qvalue^{(U)}(s_t, a_t) 
    &= \reward(s_t, a_t) + \gamma \max_{\btheta\in\cC_k\cap\cB}\big\la \btheta, \bphi_{\vvalue^{(U-1)}}(s_t, a_t)\big\ra\label{eq:V-diff_4.5} \\
    & = \reward(s_t, a_t) + \gamma \big\la \check\btheta, \bphi_{\vvalue^{(U-1)}}(s_t, a_t)\big\ra\notag \\
    & = \reward(s_t, a_t) + \gamma \check\PP\vvalue^{(U-1)}(s_t, a_t)\notag \\
    & = \reward(s_t, a_t) + \gamma \check\PP\vvalue^{(U)}(s_t, a_t) + \gamma \check\PP[\vvalue^{(U-1)} -\vvalue^{(U)} ](s_t, a_t)\notag \\
    & \leq \reward(s_t, a_t) + \gamma \check\PP\vvalue^{(U)}(s_t, a_t) + \gamma \max_{(s,a) \in \cS \times \cA}\Big|\qvalue^{(U)}(s,a) - \qvalue^{(U-1)}(s,a)\Big| \notag \\
    & \leq \reward(s_t, a_t) + \gamma \check\PP\vvalue^{(U)}(s_t, a_t) + 2\gamma^{U}\notag \\
    & = \reward(s_t, a_t) +  \gamma \big\la \check\btheta, \bphi_{\vvalue^{(U)}}(s_t, a_t)\big\ra + (1-\gamma)/T
    \label{eq:V-diff_5},
\end{align}
where $\check\btheta$ is the $\btheta$ which attains the maximum of \eqref{eq:V-diff_4.5}, $\check\PP(s'|s_t, a_t) = \la \check\btheta, \bphi(s'|s_t, a_t)\ra$. The first inequality holds due to the fact that $|\check \PP f(s_t, a_t)| \leq \max_{s' \in \cS} |f(s')|$ and $\max_s|\vvalue^{(U-1)}(s) - \vvalue^{(U)}(s)| \leq \max_{s,a} |\qvalue^{(U-1)}(s,a) - \qvalue^{(U)}(s,a)|$, the second inequality holds due to \eqref{eq:V-diff_0}. Taking $\btheta_t = \check\btheta$, our conclusion holds. 
\end{proof}

\subsection{Proof of Lemma \ref{lemma:boundk}}
\begin{proof}[Proof of Lemma \ref{lemma:boundk}]
For simplicity, we denote $K = K(T)$. 
Note that $\det(\bSigma_1) = \lambda^d$. We further have
\begin{align}
    \|\bSigma_{T+1}\|_2 &= \bigg\|\lambda\Ib+\sum_{k=0}^{K-1}\sum_{t=t_k}^{t_{k+1}-1}\bphi_{\vvalue_k}(s_t, a_t)\bphi_{\vvalue_k}(s_t, a_t)^\top\bigg\|_2 \notag \\
    & \leq \lambda + \sum_{k=0}^{K-1}\sum_{t=t_k}^{t_{k+1}-1}\big\|\bphi_{\vvalue_k}(s_t, a_t)\big\|_2^2\notag \\
    & \leq \lambda + \frac{Td}{(1-\gamma)^2}, \label{eq:boundk_0}
\end{align}
where the first inequality holds due to the triangle inequality, the second inequality holds due to the fact $\vvalue_k \leq 1/(1-\gamma)$ from Lemma \ref{lemma:UCB} and Definition \ref{assumption-linear}. \eqref{eq:boundk_0} suggests that $\det(\bSigma_{T+1}) \leq (\lambda+Td/(1-\gamma)^2)^d$. Therefore, we have
\begin{align}
    \bigg(\lambda + \frac{Td}{(1-\gamma)^2}\bigg)^d \geq \det(\bSigma_{T+1}) \geq \det(\bSigma_{t_{K-1}})\geq 2^{K-1}\det(\bSigma_{t_{0}}) = 2^{K-1}\lambda^d,  \label{eq:boundk_1}
\end{align}
where the second inequality holds since $\bSigma_T\succeq\bSigma_{t_{K-1}} $, the third inequality holds due to the fact that $\det(\bSigma_{t_{k}}) \geq 2\det(\bSigma_{t_{k-1}})$ by the update rule in Algorithm \ref{algorithm}. \eqref{eq:boundk_1} suggests 
\begin{align}
    K \leq 2d\log \frac{\lambda +Td}{\lambda(1-\gamma)^2}.\notag
\end{align}
\end{proof}

\section{Proof of lemmas in Section \ref{sec:proof:thm1}}

\subsection{Proof of Lemma \ref{lemma:regrettrans}}
\begin{proof}[Proof of Lemma \ref{lemma:regrettrans}]
We have the following equations due to the expectation. 
\begin{align}
    \EE_{\btheta}\text{Regret}(\btheta) &=\EE_{\btheta}\bigg[ \sum_{t=1}^T\vvalue^*(s_t) - \sum_{t=1}^T\sum_{t'=0}^\infty \gamma^{t'}\reward(s_{t+t'}, a_{t+t'})\bigg]\notag \\
    &=\EE_{\btheta}\bigg[ \sum_{t=1}^T\vvalue^*(s_t) - \sum_{t=1}^\infty\sum_{t'=\max\{0, t-T\}}^{t-1} \gamma^{t'}\reward(s_{t}, a_{t})\bigg]\notag \\
    & = \EE_{\btheta}\bigg[ \sum_{t=1}^T\vvalue^*(s_t) - \underbrace{\sum_{t = 1}^T \reward(s_{t}, a_{t}) \sum_{t' = 0}^{t-1} \gamma^{t'}}_{I_1} - \underbrace{\sum_{t = T+1}^\infty \reward(s_{t}, a_{t}) \sum_{t' =t-T }^{t-1} \gamma^{t'}}_{I_2}\bigg].\label{eq:regrettrans_000}
\end{align}
For $I_1$, we have
\begin{align}
    I_1 \leq \sum_{t = 1}^T \reward(s_{t}, a_{t}) \sum_{t' = 0}^{\infty} \gamma^{t'} = \sum_{t = 1}^T \reward(s_{t}, a_{t})/(1-\gamma),\label{eq:regrettrans_00}
\end{align}
where the first inequality holds since $t-1 \leq \infty$. 

For $I_2$, we have
\begin{align}
    I_2 \leq \sum_{t = T+1}^\infty 1\cdot \sum_{t' =t-T }^{t-1} \gamma^{t'} \leq \sum_{t = T+1}^\infty 1\cdot \sum_{t' =t-T }^{\infty} \gamma^{t'} = \sum_{t = T+1}^\infty 1\cdot \frac{\gamma^{t-T}}{1-\gamma} = \frac{\gamma}{(1-\gamma)^2},\label{eq:regrettrans_01}
\end{align}
where the first inequality holds since $\reward(s_t, a_t) \leq 1$, the second inequality holds since $t-1 \leq \infty$. 
Substituting \eqref{eq:regrettrans_00} and \eqref{eq:regrettrans_01} into \eqref{eq:regrettrans_000}, we have
\begin{align}
    \EE_{\btheta}\text{Regret}(\btheta)
    & \geq \EE_{\btheta}\bigg[ \sum_{t=1}^T\vvalue^*(s_t) - \frac{1}{1-\gamma}\sum_{t' = 1}^T \reward(s_{t'}, a_{t'}) - \frac{\gamma}{(1-\gamma)^2}\bigg].\notag
\end{align}

\end{proof}

\subsection{Proof of Lemma \ref{lemma:n1} }
\begin{proof}[Proof of Lemma \ref{lemma:n1}]
We have
\begin{align}
    \EE_{\btheta}N_1
    &= \sum_{t=2}^T \cP_{\btheta} (s_t = \state_1)\notag \\
    & = \underbrace{\sum_{t=2}^T \cP_{\btheta} (s_t = \state_1|s_{t-1} = \state_1)\cP_{\btheta}(s_{t-1} = \state_1)}_{I_1} + \underbrace{\sum_{t=2}^T  \cP_{\btheta} (s_t = \state_1,s_{t-1} = \state_0)}_{I_2}.\label{eq:1111}
\end{align}
For $I_1$, since $\cP_{\btheta} (s_t = \state_1|s_{t-1} = \state_1) = 1-\delta$ no matter which action is taken, thus we have
\begin{align}
    I_1 = (1-\delta) \sum_{t=2}^T\cP_{\btheta}(s_{t-1} = \state_1) = (1-\delta) \EE_{\btheta}N_1 - (1-\delta)\cP_{\btheta}(s_T = \state_1). \label{eq:2}
\end{align}
Next we bound $I_2$. We can further decompose $I_2$ as follows:
\begin{align}
    I_2 &= \sum_{t=2}^T \sum_{\ab}\cP_{\btheta}(s_t = \state_1|s_{t-1} = \state_0, a_{t-1} = \ab) \cP_{\btheta}(s_{t-1} = \state_0, a_{t-1} = \ab)\notag \\
    & = \sum_{t=2}^T \sum_{\ab}(\delta + \la \ab, \btheta\ra) \cP_{\btheta}(s_{t-1} = \state_0, a_{t-1} = \ab)\notag\\
    & = \sum_{\ab} (\delta + \la\ab, \btheta \ra)\Big[ \EE_{\btheta} N_0^{\ab} - \cP_{\btheta}(s_T = \state_0, a_T = \ab)\Big]. \label{eq:3}
\end{align}
Substituting \eqref{eq:2} and \eqref{eq:3} into \eqref{eq:1111} and rearranging it, we have
\begin{align}
    \EE_{\btheta}N_1 
    &= \sum_{\ab} (1+ \la \ab, \btheta\ra/\delta) \EE_{\btheta} N_0^{\ab}  \notag \\
    &\qquad- \underbrace{\bigg[\frac{1-\delta}{\delta}\cP_{\btheta}(s_T = \state_1) + \sum_{\ab}(1+ \la \ab, \btheta\ra/\delta)\cP_{\btheta}(s_T = \state_0, a_T = \ab)\bigg]}_{\psi_{\btheta}}\notag \\
    & =  
    \EE_{\btheta}N_0 + \delta^{-1}\sum_{\ab}\la \ab, \btheta\ra \EE_{\btheta}N_0^{\ab} - \psi_{\btheta},\label{eq:4}
\end{align}
where $\Psi_{\btheta}$ is non-negative because $\la \ab, \btheta\ra \geq -\Delta \geq -\delta$. 
\eqref{eq:4} immediately implies that
\begin{align}
    \EE_{\btheta}N_1 \leq T/2 + \delta^{-1}\sum_{\ab}\la \ab, \btheta\ra \EE_{\btheta}N_0^{\ab}/2.
\end{align}
We now bound $\EE_{\btheta}N_0$. By \eqref{eq:4}, we have
\begin{align}
    \EE_{\btheta}N_1 &= \EE_{\btheta}N_0 + \delta^{-1}\sum_{\ab}\la \ab, \btheta\ra \EE_{\btheta}N_0^{\ab} - \psi_{\btheta}\notag \\
    & \geq \EE_{\btheta}N_0 - \frac{\Delta}{\delta}\EE_{\btheta}N_0 - \frac{1-\delta}{\delta}\cP_{\btheta}(s_T = \state_1) - \bigg[1 + \frac{\Delta}{\delta}\bigg]\cP_{\btheta}(s_T = \state_0)\notag \\
    & = (1-\Delta/\delta)\EE_{\btheta}N_0 - (1-\delta)/\delta + \frac{1-\Delta}{\delta}\cP_{\btheta}(s_T = \state_0)\notag \\
    & \geq (1-\Delta/\delta)\EE_{\btheta}N_0 - (1-\delta)/\delta\label{eq:5},
\end{align}
where the first equality holds due to \eqref{eq:4}, the first inequality holds due to the fact that $\la \ab, \btheta\ra \leq \Delta$, the last inequality holds since $\cP_{\btheta}(s_T  = \state_0)>0$. \eqref{eq:5} suggests that
\begin{align}
    \EE_{\btheta}N_0 \leq \frac{T+(1-\delta)/\delta}{2 - \Delta/\delta} \leq \frac{4}{5}T, \notag
\end{align}
where the last inequality holds due to the fact that $2\Delta \leq \delta$ and $(1-\delta)/\delta < T/5$.  
\end{proof}

\subsection{Proof of Lemma \ref{lemma:kl}}
We need the following lemma:
\begin{lemma}[Lemma 20 in \citet{jaksch2010near}]\label{lemma:basicin}
Suppose $0 \leq \delta' \leq 1/2$ and $\epsilon' \leq 1-2\delta'$, then
\begin{align}
    \delta'\log\frac{\delta'}{\delta'+\epsilon'} + (1-\delta')\log\frac{(1-\delta')}{1-\delta'-\epsilon'} \leq \frac{2(\epsilon')^2}{\delta'}. \notag
\end{align}
\end{lemma}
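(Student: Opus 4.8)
The plan is to treat the left-hand side as a sum of two logarithmic terms and linearize each one with the elementary inequality $\log x \le x-1$ (valid for all $x>0$), which holds here because the hypotheses $0\le\epsilon'\le 1-2\delta'$ guarantee $\delta'+\epsilon'>0$ and $1-\delta'-\epsilon'\ge\delta'>0$, so both logarithm arguments are positive. After combining the two linearized terms, the cross terms will cancel and leave a clean rational expression, and a final lower bound on the denominator will produce the quadratic $2(\epsilon')^2/\delta'$.

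Concretely, I would first apply $\log x\le x-1$ factor-by-factor to obtain
\begin{align}
\delta'\log\frac{\delta'}{\delta'+\epsilon'} &\le \delta'\left(\frac{\delta'}{\delta'+\epsilon'}-1\right)=-\frac{\delta'\epsilon'}{\delta'+\epsilon'}, \notag \\
(1-\delta')\log\frac{1-\delta'}{1-\delta'-\epsilon'} &\le (1-\delta')\left(\frac{1-\delta'}{1-\delta'-\epsilon'}-1\right)=\frac{(1-\delta')\epsilon'}{1-\delta'-\epsilon'}. \notag
\end{align}
Adding these, factoring out $\epsilon'$, and putting the bracketed difference over the common denominator $(\delta'+\epsilon')(1-\delta'-\epsilon')$, a short computation shows the resulting numerator simplifies to exactly $\epsilon'$ (since $(1-\delta')\epsilon'+\delta'\epsilon'=\epsilon'$), so that
\begin{align}
\delta'\log\frac{\delta'}{\delta'+\epsilon'}+(1-\delta')\log\frac{1-\delta'}{1-\delta'-\epsilon'}\le \frac{(\epsilon')^2}{(\delta'+\epsilon')(1-\delta'-\epsilon')}. \notag
\end{align}

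It remains to bound the denominator from below. Writing $q=\delta'+\epsilon'$, the two constraints $\epsilon'\ge 0$ and $\epsilon'\le 1-2\delta'$ confine $q$ to $[\delta',\,1-\delta']$; since $t\mapsto t(1-t)$ is concave, its minimum over this interval is attained at an endpoint, giving
\begin{align}
(\delta'+\epsilon')(1-\delta'-\epsilon')=q(1-q)\ge \delta'(1-\delta')\ge \frac{\delta'}{2}, \notag
\end{align}
where the last step uses $\delta'\le 1/2$. Substituting this into the previous display yields the claimed bound $2(\epsilon')^2/\delta'$.

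I expect this last denominator estimate to be the crux: it is precisely where both one-sided hypotheses on $\epsilon'$ are needed (to trap $q$ in $[\delta',1-\delta']$), and it is the source of the factor $2$ on the right-hand side, via $1-\delta'\ge 1/2$. I would flag that the nonnegativity $\epsilon'\ge 0$ is essential here — it is exactly the regime in which the lemma is invoked (cf.\ the proof of Lemma~\ref{lemma:kl}) — since allowing $\epsilon'$ close to $-\delta'$ would drive $q(1-q)$ to $0$ and break the estimate. An alternative would be a second-order Taylor expansion of $q\mapsto\text{KL}(\delta'\|q)$ about $q=\delta'$, whose first-order term vanishes, but controlling the remainder $g''(\xi)=\delta'/\xi^2+(1-\delta')/(1-\xi)^2$ uniformly over $\xi\in[\delta',q]$ is more delicate than the direct $\log x\le x-1$ route, so I would prefer the argument above.
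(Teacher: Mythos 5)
The paper offers no proof of this statement---it is imported verbatim as Lemma~20 of \citet{jaksch2010near}---so there is nothing in-paper to compare against; judged on its own, your argument is correct and self-contained, and it follows the same standard route as the original source (linearize each logarithm and bound the resulting rational function). The two applications of $\log x\le x-1$ are valid, the cross terms do cancel to give numerator exactly $\epsilon'$, and the endpoint argument $q(1-q)\ge\delta'(1-\delta')\ge\delta'/2$ for $q=\delta'+\epsilon'\in[\delta',1-\delta']$ is the right way to extract the factor $2/\delta'$. Your caveat about needing $\epsilon'\ge 0$ is also substantively correct: the inequality as literally stated is false for $\epsilon'$ close to $-\delta'$ (e.g.\ $\delta'=0.1$, $\epsilon'=-0.099$ gives LHS $\approx 0.37$ versus RHS $\approx 0.196$), so the hypothesis is genuinely missing from the statement rather than merely from your proof. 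One small correction to your side remark, though: in the paper's actual invocation (proof of Lemma~\ref{lemma:kl}), $\epsilon'=\la\btheta-\btheta',\ab\ra=\pm 2\Delta/(d-1)$ does take negative values, so the lemma is \emph{not} used only in the nonnegative regime. This does not sink the paper's argument---there $|\epsilon'|\le 2\Delta$ with $4\Delta<\delta\le\delta'+\Delta$, so $\delta'+\epsilon'$ stays bounded below by a constant multiple of $\delta'$ and your computation goes through with an adjusted constant---but it means the sign restriction should be stated explicitly and the downstream constant tracked, rather than treated as automatically satisfied.
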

\begin{proof}[Proof of Lemma \ref{lemma:kl}]
Let $\sbb_t$ be $\{s_1,\dots,s_t\}$. By the Markovian property of MDPs, we can first decompose the KL divergence as follows:
\begin{align}
    \text{KL}(\cP_{\btheta'}\| \cP_{\btheta}) = \sum_{t=1}^{T-1}\text{KL}\Big[\cP_{\btheta'}(s_{t+1}|\sbb_{t})\Big\| \cP_{\btheta}(s_{t+1}|\sbb_{t})\Big],\notag
\end{align}
where the KL divergence between $\cP_{\btheta'}(s_{t+1}|\sbb_{t}), \cP_{\btheta}(s_{t+1}|\sbb_{t})$ is defined as follows:
\begin{align}
    \text{KL}\Big[\cP_{\btheta'}(s_{t+1}|\sbb_t)\Big\| \cP_{\btheta}(s_{t+1}|\sbb_t)\Big] = \sum_{\sbb_{t+1} \in \cS^{t+1}}\cP_{\btheta'}(\sbb_{t+1})\log\frac{\cP_{\btheta'}(s_{t+1}|\sbb_t)}{\cP_{\btheta}(s_{t+1}|\sbb_t)}.\notag
\end{align}
Now we further bound the above terms as follows:
\begin{align}
     &\sum_{\sbb_{t+1} \in \cS^{t+1}}\cP_{\btheta'}(\sbb_{t+1})\log\frac{\cP_{\btheta'}(s_{t+1}|\sbb_t)}{\cP_{\btheta}(s_{t+1}|\sbb_t)} \notag \\
     &\qquad= \sum_{\sbb_t \in \cS^{t}}\cP_{\btheta'}(\sbb_t)\sum_{\state \in \cS} \cP_{\btheta'}(s_{t+1} = \state|\sbb_t)\log\frac{\cP_{\btheta'}(s_{t+1} =\state|\sbb_t)}{\cP_{\btheta}(s_{t+1} = \state|\sbb_t)}\notag \\
     & \qquad= \sum_{\sbb_{t-1} \in \cS^{t-1}}\cP_{\btheta'}(\sbb_{t-1})\sum_{\state' \in \cS, \ab \in \cA}\cP_{\btheta'}(s_t = \state', a_t = \ab|\sbb_{t-1})\notag \\
     &\qquad\qquad\cdot \sum_{\state \in \cS} \cP_{\btheta'}(s_{t+1} = \state|\sbb_{t-1}, s_t = \state', a_t = \ab)\underbrace{\log\frac{\cP_{\btheta'}(s_{t+1} =\state|\sbb_{t-1}, s_t = \state', a_t = \ab )}{\cP_{\btheta}(s_{t+1} =\state|\sbb_{t-1}, s_t = \state', a_t = \ab )}}_{I_1}\notag ,
\end{align}
where $\cS = \{\state_0,\state_1\}$.
When $\state' = \state_1$, by the definition of the hard MDP constructed in Section \ref{subsec:lower bound}, we have $\cP_{\btheta'}(s_{t+1} =\state|\sbb_{t-1}, s_t = \state', a_t = \ab ) = \cP_{\btheta}(s_{t+1} =\state|\sbb_{t-1}, s_t = \state', a_t = \ab )$ for all $\btheta', \btheta$ since the transition probability at $\state_1$ is irrelevant to $\btheta$ due to the MDP we choose. This implies when $\state' = \state_1$, $I_1 = 0$. Therefore, 
\begin{align}
    &\sum_{\sbb_{t+1} \in \cS^{t+1}}\cP_{\btheta'}(\sbb_{t+1})\log\frac{\cP_{\btheta'}(s_{t+1}|\sbb_t)}{\cP_{\btheta}(s_{t+1}|\sbb_t)}\notag \\
    & \qquad= \sum_{\sbb_{t-1} \in \cS^{t-1}}\cP_{\btheta'}(\sbb_{t-1})\sum_{\ab}\cP_{\btheta'}(s_t = \state_0, a_t = \ab|\sbb_{t-1})\notag \\
     &\qquad\qquad\cdot \sum_{\state \in \cS} \cP_{\btheta'}(s_{t+1} = \state|\sbb_{t-1}, s_t = \state_0, a_t = \ab)\log\frac{\cP_{\btheta'}(s_{t+1} =s|\sbb_{t-1}, s_t = \state_0, a_t = \ab )}{\cP_{\btheta}(s_{t+1} =s|\sbb_{t-1}, s_t = \state_0, a_t = \ab )}\notag \\
     & \qquad= \sum_{\ab}\cP_{\btheta'}(s_t = \state_0, a_t = \ab)\notag \\
     &\qquad\qquad \cdot\underbrace{\sum_{\state \in \cS} \cP_{\btheta'}(s_{t+1} = s| s_t = \state_0, a_t = \ab)\log\frac{\cP_{\btheta'}(s_{t+1} =\state| s_t = \state_0, a_t = \ab )}{\cP_{\btheta}(s_{t+1} =\state| s_t = \state_0, a_t = \ab )}}_{I_2}\label{eq:kl:1}.
\end{align}
To bound $I_2$, due to the structure of the MDP, we know that $s_{t+1}$ follows the Bernoulli distribution over $\state_0$ and $\state_1$ with probability $1-\delta - \la \ab, \btheta'\ra$ and $\delta + \la \ab, \btheta'\ra$, then we have
\begin{align}
    I_2 &= (1-\la \btheta', \ab\ra - \delta)\log\frac{1-\la \btheta', \ab\ra - \delta}{1-\la \btheta, \ab\ra - \delta} + (\la \btheta', \ab\ra + \delta)\log\frac{\la \btheta', \ab\ra + \delta}{\la \btheta, \ab\ra + \delta}\leq \frac{2\la \btheta' - \btheta,\ab\ra^2}{\la \btheta', \ab\ra+\delta},\label{eq:kl:1.4}
\end{align}
where the inequality holds due to Lemma \ref{lemma:basicin} with $\delta' = \la \btheta', \ab\ra+\delta$ and $\epsilon' = \la \btheta - \btheta', \ab\ra$. Specifically, it can be verified that
\begin{align}
    \delta' = \la \btheta', \ab\ra+\delta \leq \Delta+\delta \leq 1/2,\label{eq:xxx1}
\end{align}
where the first inequality holds due to the definition of $\btheta'$, the second inequality holds since $\Delta<\delta/2 \leq 1/6$. It can also be verified that
\begin{align}
    \epsilon' = \la \btheta - \btheta',\ab\ra \leq 2\Delta \leq 1-2(\Delta+\delta) \leq 1-2\delta',\label{eq:xxx2}
\end{align}
where the first inequality holds due to the definition of $\btheta', \btheta$, the second inequality holds since $\Delta<\delta/4 \leq 1/12$, and the last inequality holds since $\delta' = \la \btheta', \ab\ra+\delta \leq \Delta+\delta$ due to the definition of $\btheta'$. \eqref{eq:xxx1} together with \eqref{eq:xxx2} show that we can indeed apply Lemma \ref{lemma:basicin} to the last step of \eqref{eq:kl:1.4}. $I_2$ can be further bounded as follows:
\begin{align}
        I_2&\leq \frac{4\la \btheta' - \btheta,\ab\ra^2}{\delta} = \frac{16\Delta^2}{(d-1)^2\delta},\label{eq:kl:2}
\end{align}
where the inequality holds due to \eqref{eq:kl:1.4} and the fact that $\delta+\la \btheta', \ab\ra \geq \delta - \Delta \geq \delta/2$.
Substituting \eqref{eq:kl:2} into \eqref{eq:kl:1}, taking summation from $t = 1$ to $T-1$, we have
\begin{align}
        \text{KL}(\cP_{\btheta'}\| \cP_{\btheta})& = \sum_{t=1}^{T-1}\sum_{\sbb_{t+1} \in \cS^{t+1}}\cP_{\btheta'}(\sbb_{t+1})\log\frac{\cP_{\btheta'}(s_{t+1}|\sbb_t)}{\cP_{\btheta}(s_{t+1}|\sbb_t)}\notag \\
        & \leq 
        \frac{16\Delta^2}{(d-1)^2\delta}\sum_{t=1}^{T-1}\sum_{\ab}\cP_{\btheta'}(s_t = \state_0, a_t = \ab)\notag \\
        & =         \frac{16\Delta^2}{(d-1)^2\delta}\sum_{t=1}^{T-1}\cP_{\btheta'}(s_t = \state_0)\notag \\
        & \leq \frac{16\Delta^2}{(d-1)^2\delta}\EE_{\btheta'}N_0,\notag
\end{align}
where the last inequality holds due to the definition of $N_0$. 
\end{proof}

\bibliographystyle{ims}
\bibliography{reference}
\end{document}